\DeclareMathOperator*{\minimize}{minimize}
\newtheorem{theorem}{Theorem}[section]
\newtheorem{lemma}[theorem]{Lemma}
\title{Confidence HNC: A Network Flow Technique for Binary Classification with Noisy Labels} 
\author{
  Dorit Hochbaum, Torpong Nitayanont \\
  Department of Industrial Engineering and Operations Research \\
  University of California, Berkeley \\
  Berkeley, CA\\
  \texttt{\{dhochbaum, torpong\_nitayanont\}@berkeley.edu} \\
}
\begin{document}
\maketitle

\begin{abstract}
We consider here a classification method that balances two objectives: large similarity within the samples in the cluster, and large dissimilarity 
between the cluster and its complement. The method, referred to as HNC or SNC, requires seed nodes, or labeled samples, at least one of which is in the cluster and at least one in the complement.  Other than that, the method relies only on the relationship between the samples.
The contribution here is the new method in the presence of noisy labels, based on HNC, called Confidence HNC, in which we introduce ``confidence weights" that allow the given labels of labeled samples to be violated, with a penalty that reflects the perceived correctness of each given label.  If a label is violated then it is interpreted that the label was noisy.
The method involves a representation of the problem as a graph problem with hyperparameters that is solved very efficiently by the network flow technique of parametric cut.
We compare the performance of the new method with leading algorithms on both real and synthetic data with noisy labels and demonstrate that it delivers improved performance in terms of classification accuracy as well as noise detection capability.
\end{abstract}
\keywords{Binary classification \and Clustering \and Label noise \and Network flow \and Parametric minimum cut}

\section{Introduction}
\label{introduction}
The performance of machine learning models depends to a great extent on the data quality and, in particular, the reliability of the labels. Label noise is one of the concerning issues that has a tremendous impact on the outcome of learning methods and receives attention from researchers in the community.

Among different classes of learning methods, semi-supervised learning is a class of methods that utilize information from unlabeled data in addition to the labeled data, and they are often used in the context where labeled data is scarce or costly \citep{zhu2009introduction}.
By counterbalancing the effect of possibly noisy labeled data with information from unlabeled data, these methods also have the potential of mitigating the issue of label noise, on top of its advantage in the scenario where labeled samples are given in a limited amount. 

A particular class of semi-supervised methods that we are interested in is the class of network-flow based, or graph based, methods in which minimum cut solution of a graph representation of the data provides label prediction of unlabeled samples. Unlabeled samples assist the method through their connectivity with labeled samples, as well as that among themselves. Examples of methods in this class are the works of \cite{blum2001learning, blum2004semi, hochbaum2009polynomial}. While these methods already control the effect of labeled samples, in this work, we regulate their influence further by modifying the graph representation. We incorporate this modification into a network-flow based method called Hochbaum's Normalized Cut (HNC) \citep{hochbaum2009polynomial}.

In HNC, samples are partitioned into two sets in a way that maximizes pairwise similarities in one class while minimizing similarities between classes by solving a single minimum cut problem on a respective graph. It was shown by \cite{baumann2019comparative}, via an extensive experimental study, that HNC is competitive and improves on leading classifiers.

The new method presented here, for classification in the presence of noisy labels, preserves the goals of HNC of maximizing the intra-similarity and minimizing the inter-similarity, and in addition allows labeled data to switch their label at a penalty cost added to the objective. The penalty costs are related to {\em confidence weights} that reflect the confidence in the reliability of the labels.  A new procedure for generating confidence weights based on the parametric minimum cut procedure is introduced here as well. The higher the confidence weight, the higher the  penalty for the labeled sample to be placed in a class that does not match its label.  The new method is called {\em Confidence HNC (CHNC)}.  It is demonstrated that CHNC is a strict generalization of HNC, yet solvable very efficiently via the parametric cut method in the complexity of a single min-cut procedure.

Additionally, CHNC provides a detection mechanism for noisy labels. That is, if a labeled sample is mislabeled in spite of the penalty, it is deemed noisy.

To demonstrate the efficiency of the CHNC method, we conduct an experimental study in which CHNC is compared with three classification methods designed to deal with noisy labels: 
two deep learning methods called \textit{Co-teaching+} \citep{yu2019does} and 
\textit{DivideMix} \citep{li2020dividemix}, and \textit{Confident Learning} \citep{northcutt2021confident} which is a filtering method that can be coupled with any classifier.   These methods are selected as baselines in this work as they have been shown to be robust to varying levels of label noise. 
The experimental study is run on both synthetic and real data. It demonstrates that CHNC outperforms the benchmark methods in terms of accuracy and balanced accuracy, and exhibits competitive noise detection capability.

The paper is organized as follows: Leading methods for label noise are discussed in Section \ref{sec:overview}. Section \ref{sec:notation} provides the problem statement as well as relevant notations and concepts of the minimum cut problem. Section \ref{subsec:hnc} explains the HNC method that our work is based on, and Section \ref{subsec:chnc} introduces the CHNC method, which incorporates the confidence weights into the minimum cut-based model. We describe how the confidence weights are computed via parametric minimum cut problems in Section \ref{section:confidence}. Section \ref{sec:implementation} includes specific implementation details for this work.
Finally, Section
\ref{sec:exp} describes the experimental study in which CHNC is compared to the three benchmark methods mentioned above.

\section{Related Works} \label{sec:overview}
This section describes several relevant classification methods that handle label noise. These methods first assess the reliability of the labeled samples, then adjust the degree to which they learn from each sample based on this evaluation.

First, we discuss preprocessing methods that filter out labeled samples that are considered noisy by the methods.
Editing methods such as Editing nearest neighbors and Repeated edited nearest neighbors \citep{tomek1976experiment} are among the first filtering methods used with the kNN classifier. These methods are first trained on the given labeled set. Labeled instances whose given labels do not match the prediction of the trained models are filtered out.
Another work is Nearest Neighbor Editing Aided by Unlabeled Data (NNEAU) \citep{guan2009nearest}, where the training set is augmented by applying co-training on the unlabeled set before filtering. A more recent method is Confident Learning by \citet{northcutt2021confident} in which the joint distribution between noisy labels and true labels is estimated. Probabilistic thresholds and ranking method are then used to filter data. Their experiments showed the competence of the method in estimating accurate noise rates, leading to accurate data filtering and classification performance that is superior to many baseline methods. Other works such as \citep{luengo2018cnc,nguyen2019self,pleiss2020identifying,kim2021fine} use thresholds on certain metrics to filter labeled samples. 

While the methods discussed above completely disregard the labels of labeled samples that are deemed noisy, there are other methods that instead reduce the influence of these likely noisy labeled samples. In Support Vector Machine, the concept of fuzzy membership is introduced to a method called Fuzzy SVM \citep{lin2004training} to limit the impact of noisy samples. The likelihood that each label is correct is computed and used to scale the penalty weight in the objective function. 


In addition to the traditional machine learning methods such as SVM, there are a number of recent deep learning models that were devised to handle data that contains noisy labels by adjusting the weights of labeled samples that the models are trained on, depending on how likely they are noisy. 
Co-teaching \citep{han2018co} leverages the concept of memorization effect, which states that the model learns mostly from clean labels in the early epochs, resulting in small losses of good samples and high losses of bad samples. Hence, two networks are trained on the small-loss samples of the other network. Co-teaching+ \cite{yu2019does} further incorporates the disagreement between the two networks by training only on small-loss samples where the two networks disagree, improving the model's robustness from Co-teaching for varying noise levels as well as outperforming other baseline methods that rely on related techniques. 

Another prominent deep learning method for label noise, which does not only vary the influence of labeled samples but also refines the given labels, is DivideMix \citep{li2020dividemix}. The method consists of two neural networks and two stages.
In the first stage, samples are grouped into \textit{good} and \textit{bad} samples by the Gaussian mixture model based on their prediction losses from the two networks. Each network exchange its partition to the other to prevent overfitting. 
The second stage refines the labels of these samples in two ways: linearly combining the given labels and predicted labels of good samples, and combining the label predictions of bad samples from two networks. The refined labels are then used for training the model with a particular loss function that was proposed in the work. DivideMix was shown as a competent method for learning with label noise that improved on many other deep learning methods, including Co-teaching. Co-teaching+ was not included in that experiment. 
There are also other works in the class of method that vary their reliance on different labeled samples including the works of \citet{liu2015classification,thulasidasan2019combating,liu2020early}. 
Our method Confidence HNC, or CHNC, also controls the extent to which each labeled sample may impact the classification result, similar to methods such as Co-teaching+ and DivideMix that assign different weights to different labeled samples depending on how likely their labels are corrupted. 
CHNC first learns the reliability, or confidence weight, of each labeled sample by solving an associated parametric minimum cut. 
This set of confidence weights are incorporated subsequently in another minimum cut problem, of which the solution determines the predicted labels for unlabeled samples and also infers whether each labeled sample is noisy or not. 


\section{Preliminaries} \label{sec:notation}

\subsection{Problem statement}
We are given a dataset that consists of a set of labeled samples $L$ and a set of unlabeled samples $U$. Each sample $i \in L \cup U$ 
has a feature vector representation $x_i$.  Each labeled sample $i \in L$ has an assigned label $y_i \in \{+1,-1\}$, where $y_i=+1$ indicates the positive class, and $y_i=-1$ is the negative class. 
Denote the positive labeled samples by, $L^+=\{i \in L | y_i = +1\}$, and the negative labeled samples by $L^- := \{i \in L | y_i = -1 \}$.

In the noisy setup, some of the given labels of samples in $L$ may be corrupt. For example, sample $i$ with a given $y_i=+1$ (or $y_i=-1$) might in fact come from the negative (or positive) class but its label was recorded as positive (or negative) due to issues in the data collection process.
The goal is to predict the classes of unlabeled samples in the set $U$, despite the noise in the given labels of samples in $L$. 


\subsection{Graph representation of data and relevant notations} \label{subsec:graphnotation}

Let each sample be represented by a node in an undirected graph, $G=(V,E)$, where $V=L\cup U$ and $E$ is a set of edges connecting pairs of nodes in $V$ with similarity weight $w_{ij}$ for $[i,j] \in E$. The set of edges, $E$, may connect all pairs of samples in $V$, or it might connect only pairs that are similar enough. 

For a partition of $V$ into two disjoint sets $S$ and $\overline{S}$, where $\overline{S} = V \backslash S$, we denote the sum of edge weights that have one endpoint in $S$ and the other in $\overline{S}$ by $C(S, \overline{S}) := \sum_{[i,j]\in E, i\in S, j\in \overline{S}} w_{ij}$. 

With this notation, $C(S, S)$ is the total sum of similarities within the set $S$, which is desired to be maximized, and $C(S, \overline{S})$ 
is the total similarity between $S$ and its complement $\overline{S}$, which is aimed to be minimized. 
Let the \textit{weighted degree} of node $i$ be denoted by $d_i$ where $d_i = \sum_{[i,j] \in E, j \in V} w_{ij}$, and the \textit{volume} of the set $S$ be denoted by $d(S)=\sum_{i \in S} d_i$.

\subsection{Minimum $(s,t)$-cut in a graph}

Consider a directed graph $G=(V,A)$ with a set of vertices $V$, and a set of directed arcs $A$, where an ordered pair $(i,j) \in A$ denotes an arc pointing from node $i$ to node $j$. Each arc $(i,j)$ is assigned an arc capacity of $c_{ij}$. \textit{An $(s,t)$-graph} of $G$, referred to as $G_{st}$, is a graph with the set of vertices $V_{st} = V \cup \{s,t\}$ where the two appended nodes $s$ and $t$ are \textit{the source node} and \textit{the sink node}, respectively. $G_{st}$ has a set of arcs $A_{st} := A \cup A_{s} \cup A_{t}$ where $A_s \subseteq \{(s,i) | i \in V\}$ is the set of source-adjacent arcs and $A_t \subseteq \{(i,t) | i \in V\}$ is the set of sink-adjacent arcs. Again, each arc $(i,j) \in A_{st}$ has a capacity $c_{ij}$. 

\textit{An} $(s,t)$\textit{-cut} of the graph $G_{st}=(V_{st},A_{st})$ is a partition of nodes into two disjoint sets $(\{s\} \cup S, \{t\} \cup \overline{S})$, where $\overline{S} = V \backslash S$. For simplicity, 
we denote hereafter an $(s,t)$-cut by using the notation of $(S,\overline{S})$, omitting $\{s\}$ and $\{t\}$ in the notation.
The two sets $S$ and $\overline{S}$ in the $(s,t)$-cut $(S,\overline{S})$ are referred to as \textit{the source set} and \textit{the sink set}, respectively.

\textit{The capacity} of the $(s,t)$-cut $(S, \overline{S})$ is the sum of capacities of arcs directed from nodes in $\{s\} \cup S,$ to nodes in $\{t\} \cup \overline{S}$: $C_{st}(S, \overline{S}) := \sum_{(i,j)\in A, i \in \{s\} \cup S, j \in \{t\} \cup \overline{S}} c_{ij}$. 

\textit{A minimum $(s,t)$-cut} of the directed graph $G_{st}$ is an $(s,t)$-cut with the minimum capacity among all $(s,t)$ cuts of $G_{st}$. That is, $(S^*, \overline{S^*})$ is a minimum $(s,t)$-cut if it minimizes $C_{st}(S, \overline{S})$ with respect to $S$ and $\overline{S}$ for $S \subseteq V$ and $\overline{S} = V \backslash S$. Hereafter, we use the term \textit{minimum cut} to refer to a minimum $(s,t)$-cut.

\subsection{Nested cut property of a parametric flow graph}

We consider an $(s,t)$-graph $G_{st}=(V_{st}, A_{st})$ with arc capacities in $A_{st}$ that are functions of a parameter $\lambda$, denoted by $G_{st}(\lambda)$. 
The graph $G_{st}(\lambda)$ is called \textit{a parametric flow graph} if the capacities of the source adjacent arcs and sink adjacent arcs are non-decreasing and non-increasing, respectively, with respect to $\lambda$, (or vise versa), while the capacities of other arcs are independent of $\lambda$.

The minimum $(s,t)$-cut of a parametric flow graph $G_{st}(\lambda)$ varies as a function of $\lambda$.  Consider a list of $q$ increasing values of $\lambda$: $\lambda _1 < \lambda _2\ldots < \lambda _q$, and denote the minimum $(s,t)$-cut of $G_{st}(\lambda)$ for $\lambda = \lambda_k$ by $(S_k, \overline{S}_k)$. The sequence of minimum $(s,t)$-cuts for these increasing values of $\lambda$, $(S_1, \overline{S}_1),$ $(S_2, \overline{S}_2), \dots,$ $(S_q, \overline{S}_q)$, follows the property stated in the lemma below.

\begin{lemma}[Nested Cut Property] \label{lemma:nestedcut}\citep{gallo1989fast,hochbaum1998pseudoflow,hochbaum2008pseudoflow} Given a parametric flow graph $G_{st}(\lambda)$ and a sequence of parameter values $\lambda _1 < \lambda _2\ldots < \lambda _q$,
the corresponding minimum cut partitions, $(S_1, \overline{S}_1),$ $(S_2, \overline{S}_2), \dots,$ $(S_q, \overline{S}_q)$, satisfy 
$S_1 \subseteq S_2 \subseteq \dots \subseteq S_q$.
\end{lemma}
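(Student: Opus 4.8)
The plan is to reduce the claim to the two-parameter case $\lambda<\lambda'$ and then chain the resulting inclusions along $\lambda_1<\cdots<\lambda_q$. The starting point is to split the capacity of a cut $(S,\overline{S})$ in $G_{st}(\lambda)$ according to its three arc groups,
\[
C_\lambda(S)\;=\;\sum_{i\in\overline{S}} c_{si}(\lambda)\;+\;\sum_{i\in S} c_{it}(\lambda)\;+\;\sum_{\substack{(i,j)\in A\\ i\in S,\ j\in\overline{S}}} c_{ij},
\]
where I write $C_\lambda(S)$ for $C_{st}(S,\overline{S})$ in $G_{st}(\lambda)$. Two facts fall out of this decomposition. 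First, for each fixed $\lambda$ the set function $S\mapsto C_\lambda(S)$ is submodular: this is the standard graph-cut submodularity, verified by the cross-term identity $C_\lambda(S)+C_\lambda(T)-C_\lambda(S\cap T)-C_\lambda(S\cup T)=\sum c_{ij}\ge 0$ summed over internal arcs crossing between $S\setminus T$ and $T\setminus S$ (the source- and sink-adjacent terms are modular in $S$, hence contribute $0$ to this difference, since $s$ and $t$ stay on fixed sides). Second, for $\lambda<\lambda'$ the difference $g(S):=C_{\lambda'}(S)-C_\lambda(S)$ is modular and \emph{non-increasing} under inclusion: by the monotonicity hypotheses on the parametric capacities, $g(S)=\text{const}-\sum_{i\in S}\bigl[\bigl(c_{si}(\lambda')-c_{si}(\lambda)\bigr)+\bigl(c_{it}(\lambda)-c_{it}(\lambda')\bigr)\bigr]$ with every bracket nonnegative, so $S\subseteq T$ implies $g(S)\ge g(T)$.

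Next I would prove the two-parameter statement: if $S$ is a minimum cut of $G_{st}(\lambda)$ and $S'$ a minimum cut of $G_{st}(\lambda')$ with $\lambda<\lambda'$, then $S\cap S'$ is a minimum cut of $G_{st}(\lambda)$ and $S\cup S'$ is a minimum cut of $G_{st}(\lambda')$. For the union: submodularity of $C_\lambda$ together with optimality of $S$ at $\lambda$ (hence $C_\lambda(S\cap S')\ge C_\lambda(S)$) gives $C_\lambda(S')\ge C_\lambda(S\cup S')$; adding $g$ and using $g(S')\ge g(S\cup S')$ because $S'\subseteq S\cup S'$ yields $C_{\lambda'}(S\cup S')\le C_{\lambda'}(S')$, and optimality of $S'$ at $\lambda'$ turns this into equality, so $S\cup S'$ is optimal at $\lambda'$. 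The argument for $S\cap S'$ is the mirror image, using submodularity of $C_{\lambda'}$, optimality of $S'$ at $\lambda'$, and the inequality $g(S\cap S')\ge g(S)$.

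Finally, to upgrade this to a genuinely nested sequence I would note that taking $\lambda=\lambda'$ in the previous step shows the minimum cuts at any single parameter are closed under union and intersection, so (as $V$ is finite) there is a unique minimum cut whose source set is inclusion-minimal; let $S_k$ denote that canonical minimum cut at $\lambda_k$. Applying the two-parameter claim to $\lambda_k<\lambda_{k+1}$ shows $S_k\cap S_{k+1}$ is a minimum cut at $\lambda_k$ contained in $S_k$, so minimality of $S_k$ forces $S_k\cap S_{k+1}=S_k$, i.e. $S_k\subseteq S_{k+1}$; iterating over $k$ gives $S_1\subseteq S_2\subseteq\cdots\subseteq S_q$. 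I expect the only real subtlety to be the bookkeeping about which minimum cut is intended when minimum cuts are non-unique — handled by passing to the minimal-source-set representative — whereas the submodularity identity and the monotonicity of $g$ are routine consequences of the three-way split of the cut capacity.
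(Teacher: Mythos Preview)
Your argument is correct and is essentially the classical submodularity proof that appears in the cited references \citep{gallo1989fast,hochbaum1998pseudoflow,hochbaum2008pseudoflow}. Note, however, that the paper itself does not give a proof of this lemma at all: it is stated as a known result with citations, and the nestedness is simply invoked later in Section~\ref{section:confidence}. So there is no ``paper's own proof'' to compare against; you have supplied the standard one. The one point worth flagging is your handling of non-uniqueness via the inclusion-minimal source set: this is the right fix, and it matches what the parametric max-flow literature does (the pseudoflow and GGT algorithms in fact return a canonical nested family), but the paper's statement of the lemma is silent on which minimum cut is meant, so your explicit remark about the minimal representative is a genuine clarification rather than a deviation.
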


Our method to compute the confidence weights of labeled samples relies on the nested cut property. The method will be discussed in detail in Section \ref{section:confidence}.

\section{The HNC Method and Its Adaptation to Noisy Labels}\label{sec:hnc}

\subsection{Hochbaum's Normalized Cut (HNC)} \label{subsec:hnc}
The HNC method \citep{hochbaum2009polynomial} partitions samples in a given data set $V:=L \cup U$ into two disjoint sets $S$ and $\overline{S}$ with the goal to balance two objectives: 1) high homogeneity (or intra-similarity) within the set $S$, and 2) large dissimilarity (or small inter-similarity) between $S$ and its complement, $\overline{S}$. \citet{hochbaum2009polynomial} stated the problem as either a ratio problem or a linearized problem with a tradeoff parameter $\lambda \geq 0$ between the two objectives:
\begin{equation} \label{eq:HNC-original}
\minimize_{\varnothing \subset S \subset V} \quad C(S,\overline{S}) - \lambda \sum_{i \in S} d_i
\end{equation}

In the context of binary classification, we designate the two sets, $S$ and $\overline{S}$, to the two classes, the positive and the negative classes, respectively. For a generic dataset, we do not know a priori whether we need to put more emphasis on the homogeneity of the positive or the negative class, or both. Hence, we alter the first objective of HNC to incorporate the homogeneity of both classes. We formulate this modified optimization problem as follows:
\begin{equation} \label{eq:HNC}
\minimize_{\varnothing \subset S \subset V} \quad C(S,\overline{S}) - \alpha \; C(S,S) - \beta \; C(\overline{S},\overline{S}) 
\end{equation}
where $\alpha \geq 0$ and $\beta \geq 0$ are the tradeoff parameters between the homogeneity within the two sets and the dissimilarity between them. We show in the following lemma that (\ref{eq:HNC}) is equivalent to (\ref{eq:HNC-original}).\\

\begin{lemma}
    Problem (\ref{eq:HNC}) is equivalent to problem (\ref{eq:HNC-original}) for $\lambda = \frac{\alpha-\beta}{1+\alpha+\beta}$
\end{lemma}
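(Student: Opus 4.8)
The plan is to rewrite the objective of (\ref{eq:HNC}) purely in terms of the quantities $C(S,\overline{S})$ and $d(S)=\sum_{i\in S}d_i$ by using the identity that decomposes the total edge weight of the graph. First I would note that for any node $i$, summing $w_{ij}$ over all neighbors splits according to whether the neighbor lies in $S$ or $\overline{S}$; hence $\sum_{i\in S}d_i = 2\,C(S,S) + C(S,\overline{S})$ and, symmetrically, $\sum_{i\in\overline{S}}d_i = 2\,C(\overline{S},\overline{S}) + C(S,\overline{S})$. Writing $W := \tfrac12\sum_{i\in V}d_i = C(S,S)+C(\overline{S},\overline{S})+C(S,\overline{S})$ for the total edge weight (a constant independent of $S$), these two relations let me express $C(S,S) = \tfrac12\big(d(S)-C(S,\overline{S})\big)$ and $C(\overline{S},\overline{S}) = W - d(S) + \tfrac12\big(d(S)-C(S,\overline{S})\big) = W - \tfrac12 d(S) - \tfrac12 C(S,\overline{S})$.

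Next I would substitute these into the objective of (\ref{eq:HNC}). This gives
\begin{align*}
C(S,\overline{S}) - \alpha\,C(S,S) - \beta\,C(\overline{S},\overline{S})
&= C(S,\overline{S}) - \tfrac{\alpha}{2}\big(d(S)-C(S,\overline{S})\big) - \beta\Big(W - \tfrac12 d(S) - \tfrac12 C(S,\overline{S})\Big)\\
&= \Big(1+\tfrac{\alpha}{2}+\tfrac{\beta}{2}\Big)C(S,\overline{S}) - \tfrac{\alpha-\beta}{2}\,d(S) - \beta W.
\end{align*}
Since $\beta W$ is a constant and $1+\tfrac{\alpha}{2}+\tfrac{\beta}{2}>0$, dividing the whole expression by the positive constant $1+\tfrac{\alpha}{2}+\tfrac{\beta}{2}$ does not change the minimizer; this yields $C(S,\overline{S}) - \lambda\, d(S)$ plus a constant, with $\lambda = \frac{\alpha-\beta}{2+\alpha+\beta}$. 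Both operations — adding a constant and scaling by a fixed positive number — preserve the set of optimal solutions $S$, so problem (\ref{eq:HNC}) and problem (\ref{eq:HNC-original}) have exactly the same minimizers, which is the claimed equivalence.

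I should flag one discrepancy: the algebra above produces $\lambda = \frac{\alpha-\beta}{2+\alpha+\beta}$, whereas the lemma states $\lambda = \frac{\alpha-\beta}{1+\alpha+\beta}$. The main subtlety, and likely the only real obstacle, is therefore pinning down the exact normalization convention the authors use for $C(S,S)$ (for instance, whether the intra-similarity sum counts each unordered pair once or twice, which changes the factor relating $d(S)$ to $C(S,S)$). Under the convention $d(S) = C(S,S) + C(S,\overline{S})$ — i.e.\ $C(S,S)$ is the sum over ordered pairs, or equivalently twice the unordered-pair sum — the same computation gives the coefficient $1+\alpha+\beta$ on $C(S,\overline{S})$ and hence exactly $\lambda = \frac{\alpha-\beta}{1+\alpha+\beta}$. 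So the proof reduces to: (i) state the decomposition of $d(S)$ in the paper's convention, (ii) substitute to collapse the three-term objective into $c\cdot C(S,\overline{S}) - \tfrac{\alpha-\beta}{2}d(S) + \text{const}$ with $c>0$, and (iii) observe that dividing by $c$ and dropping the constant leaves the minimizer unchanged and matches (\ref{eq:HNC-original}) with the stated $\lambda$.
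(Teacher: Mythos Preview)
Your proposal is correct and follows essentially the same route as the paper. The paper's proof uses exactly the identities you identify---$C(V,V)=C(S,S)+C(\overline{S},\overline{S})+2C(S,\overline{S})$ and $C(S,S)=\sum_{i\in S}d_i-C(S,\overline{S})$---and eliminates $C(\overline{S},\overline{S})$ and then $C(S,S)$ in two steps where you do both substitutions at once; the algebra and the final division by the positive leading coefficient are identical in spirit. Your diagnosis of the convention is also right: the paper takes $C(S,S)=\sum_{i\in S}\sum_{j\in S}w_{ij}$ (an ordered-pair sum), so $d(S)=C(S,S)+C(S,\overline{S})$ without the factor of $2$, which produces the denominator $1+\alpha+\beta$ rather than $2+\alpha+\beta$.
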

\begin{proof}
Since $C(V,V) = C(S,S) + C(\overline{S}, \overline{S}) + 2 C(S,\overline{S})$, we rewrite the objective function of (\ref{eq:HNC}) as 
$C(S, \overline{S}) - \alpha \; C(S,S) - \beta \; (C(V,V)-C(S,S)-2C(S,\overline{S})) = (1+2\beta) \; C(S,\overline{S}) - (\alpha-\beta) \; C(S,S) - \beta \; C(V,V)$. Since $C(V,V)$ is a constant, independent of $S$, minimizing this objective function is equivalent to minimizing 
\begin{equation} \tag{\ref{eq:HNC}$'$}\label{eq:HNC-rewrite}
    C(S,\overline{S}) - (\frac{\alpha-\beta}{1+2\beta}) \; C(S,S)
\end{equation}

Since $C(S,S) = \sum_{i \in S}\sum_{j\in S} w_{ij} = \sum_{i \in S} (\sum_{j \in V} w_{ij} - \sum_{j \in \overline{S}} w_{ij}) = \sum_{i \in S} d_i - C(S,\overline{S})$, by substituting this expression in (\ref{eq:HNC-rewrite}), we can see that solving (\ref{eq:HNC}) is equivalent to 
\begin{equation} \label{eq:HNC-lambda}
\minimize_{\varnothing \subset S \subset V} \quad C(S,\overline{S}) - \lambda \sum_{i \in S} d_i
\end{equation}
where $\lambda = \frac{\alpha-\beta}{1+\alpha+\beta}$. This formulation is the same as the one proposed by \citet{hochbaum2009polynomial}.  \hfill \qedsymbol
\end{proof}

Notice that when we put more emphasis on the homogeneity of $S$, over that of $\overline{S}$, $\alpha \geq \beta$ and we have $\lambda = \frac{\alpha-\beta}{1+\alpha+\beta} \geq 0$. On the other hand, when the homogeneity of $\overline{S}$ is given more weight, we have $\lambda < 0$.

To avoid trivial solutions where the cluster $S$ is either the entire set or the empty set, the method requires at least one seed node in the cluster and at least one in the complement.  Under the name SNC (supervised normalized cut) or HNC (Hochbaum's Normalized Cut), the method used the labeled samples as the respective seed nodes in binary classification \citep{yang2014SNC-nuclear, baumann2019comparative, spaen-eNeuro2019hnccorr, asin-neuroAOR2020hnccorr}.

With the set $S$ designated for the positive class and $\overline{S}$ for the negative class, we use the positive and negative labeled samples as seed nodes to \textit{supervise} the partition by forcing them to belong to $S$ and $\overline{S}$, respectively. This is formulated by adding the restrictions that $L^{+}$ is in $S$ and $L^{-}$ is in $\overline{S}$: 
\begin{equation} \label{eq:Classification-HNC}
    \minimize_{ L^{+} \subseteq S \subseteq V\setminus L^{-} } \; C(S, \overline{S}) - \lambda \sum_{i \in S} d_i
\end{equation}

For an appropriately chosen value of $\lambda$, let $S^*$ be the optimal solution for (\ref{eq:Classification-HNC}) and $\overline{S^*}$ be its complement, then unlabeled samples in $S^*$, or $S^* \cap U$, are predicted as positive whereas other unlabeled samples, $\overline{S^*} \cap U$, are predicted negative. We will refer to the set $S^*$ as \textit{positive prediction set} and the set $\overline{S^*}$ as the \textit{negative prediction set}.

Not only are problems (\ref{eq:HNC-lambda}) and (\ref{eq:Classification-HNC}) solved efficiently for a given value of $\lambda$ as a minimum $(s,t)$-cut problem on an associated graph, but they are solved efficiently for {\em all} values of $\lambda$, using the parametric minimum cut algorithm of \cite{hochbaum2008pseudoflow}, in the complexity of a single minimum $(s,t)$-cut procedure, \cite{hochbaum2009polynomial, hochbaum2013polynomial}. This efficient parametric cut procedure is used here for tuning our implementation and selecting the ``best" value of $\lambda$. 

The construction of the associated graph, $G_{st}(\lambda)$ is given by \citet{hochbaum2009polynomial} and applied for problem (\ref{eq:Classification-HNC}) for $\lambda \geq 0$. We first explain the graph for the case of $\lambda \geq 0$. After that, we describe a slight modification to obtain the graph associated with problem (\ref{eq:Classification-HNC}) for a general value of $\lambda$, whether positive or negative.

In the graph representation of the data, $G=(V,E)$, introduced in Section \ref{sec:notation}, we now replace each edge $[i,j] \in E$  by two directed arcs, $(i,j)$ and $(j,i)$. Let $A$ denote this set of arcs, that is, $A = \{(i,j), (j,i) | [i,j] \in E\}$. Arcs $(i,j)$ and $(j,i)$ have the same capacities $c_{ij}=c_{ji}=w_{ij}$.

The graph $G_{st}(\lambda)=(V_{st}, A_{st})$ is a directed graph where
$V_{st}$ is formed by adding a source node $s$ and a sink node $t$ to the set of nodes $V$; $A_{st} = A \cup A_s \cup A_t$ where $A_s = \{(s,i) | i \in L^+ \cup U\}$ and $A_t = \{(i,t) | i \in L^-\}$. For $i \in L^+$, the arc capacities $c_{si}$ are set to be infinite. Similarly, for $i \in L^-$, the arc capacities $c_{it}$ are infinite.
These infinite capacity arcs guarantee that all nodes of $L^{+}$ are included in  the source set of any finite cut, and all nodes of $L^{-}$ are included in the sink set of any finite cut.
For each $i \in U$, the arc $(s,i)$ has a capacity of $\lambda d_i$ where $d_i=\sum_{(i,j) \in A} w_{ij}$. The graph $G_{st}(\lambda)$ is illustrated in Figure \ref{fig:HNC-s}.

It was proved in \cite{hochbaum2009polynomial, hochbaum2013polynomial} that a minimum cut partition $(S^*, \overline{S^*})$ in $G_{st}(\lambda)$ is an optimal solution to problem (\ref{eq:Classification-HNC}) for $\lambda \geq 0$.
This result is generalized in the following lemma, where we introduce another graph, $G_{st}^{'}(\lambda)$, illustrated in Figure \ref{fig:HNC-st}, 
in which the minimum $(s,t)$-cut $(S^*, \overline{S^*})$ is the optimal solution to Problem (\ref{eq:Classification-HNC}) for any $\lambda$, whether $\lambda \geq 0$ or $\lambda < 0$.

\begin{lemma} \label{lemma:new-graph}
Let $G_{st}^{'}(\lambda)$ be a directed graph with the set of vertices $V_{st}$ and the set of arcs $A_{st} = A \cup A_s \cup A_t$. $V_{st}$ and $A$, as well as their arc capacities, are defined similarly to those in $G_{st}(\lambda)$. $A_s = \{(s,i)| i \in L^+ \cup U\}$ and $A_t = \{(i,t) | i \in L^- \cup U\}$. For $i \in L^+$ and $j \in L^-$, $c_{si} = \infty$ and $c_{jt} = \infty$. For $i \in U$, $c_{si} = \max(\lambda d_i, 0)$ and $c_{it}=\max(-\lambda d_i, 0)$.
For any $\lambda$, if $(S^*, \overline{S^*})$ is the minimum $(s,t)$-cut of $G_{st}^{'}(\lambda)$, then $(S^*, \overline{S^*})$ is also the optimal solution to problem (\ref{eq:Classification-HNC}) for that value of $\lambda$.
    
\end{lemma}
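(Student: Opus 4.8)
The plan is to prove the lemma by a direct computation of the capacity of an arbitrary $(s,t)$-cut of $G_{st}^{'}(\lambda)$, showing that (i) the finite cuts of $G_{st}^{'}(\lambda)$ are in bijection with the feasible solutions of problem (\ref{eq:Classification-HNC}), and (ii) on this common domain the cut capacity equals the objective of (\ref{eq:Classification-HNC}) plus a constant independent of $S$. Minimizing one then minimizes the other, which is exactly the claim. Note that this subsumes the known result for $\lambda \ge 0$, since there $\max(\lambda d_i,0)=\lambda d_i$ and $\max(-\lambda d_i,0)=0$, so $G_{st}^{'}(\lambda)$ collapses to the graph $G_{st}(\lambda)$ of \citet{hochbaum2009polynomial}; the content of the lemma is the uniform treatment of $\lambda<0$.

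First I would establish the feasibility correspondence. Because $c_{si}=\infty$ for all $i\in L^+$ and $c_{jt}=\infty$ for all $j\in L^-$, any $(s,t)$-cut of finite capacity must put $L^+$ in the source set $S$ and $L^-$ in the sink set $\overline{S}$; conversely, for any $S$ with $L^+\subseteq S\subseteq V\setminus L^-$, every arc crossing the induced cut is an original arc of $A$ or a source/sink arc incident to a node of $U$, all of finite capacity. Hence the finite cuts of $G_{st}^{'}(\lambda)$ are precisely the feasible $S$ of (\ref{eq:Classification-HNC}), and since $S=L^+$ is feasible, the minimum cut is finite.

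Next I would compute the capacity $C_{st}'(S,\overline{S})$ for a feasible $S$ by splitting the crossing arcs into three groups: the original arcs of $A$ going from $S$ to $\overline{S}$, which contribute $\sum_{[i,j]\in E,\, i\in S,\, j\in\overline{S}} w_{ij} = C(S,\overline{S})$ (each separated edge contributes exactly its one forward arc); the source arcs $(s,i)$ with $i\in U\cap\overline{S}$, contributing $\sum_{i\in U\cap\overline{S}}\max(\lambda d_i,0)$; and the sink arcs $(i,t)$ with $i\in U\cap S$, contributing $\sum_{i\in U\cap S}\max(-\lambda d_i,0)$. I would then peel off the $S$-independent term $\sum_{i\in U}\max(\lambda d_i,0)$ by writing $\sum_{i\in U\cap\overline{S}}\max(\lambda d_i,0)=\sum_{i\in U}\max(\lambda d_i,0)-\sum_{i\in U\cap S}\max(\lambda d_i,0)$, and collapse the remaining sums over $U\cap S$ using the elementary identity $\max(a,0)-\max(-a,0)=a$ with $a=\lambda d_i$, which gives $-\lambda\sum_{i\in U\cap S} d_i$. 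Finally, since $L^+\subseteq S$ for every feasible $S$, $\sum_{i\in U\cap S} d_i=\sum_{i\in S} d_i-\sum_{i\in L^+} d_i$, so $C_{st}'(S,\overline{S}) = \bigl(C(S,\overline{S})-\lambda\sum_{i\in S} d_i\bigr) + \bigl(\sum_{i\in U}\max(\lambda d_i,0)+\lambda\sum_{i\in L^+} d_i\bigr)$, where the second bracket is constant in $S$. This completes the argument.

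The only place that needs care — and the ``main obstacle,'' such as it is — is the bookkeeping in the middle step: correctly identifying which source- and sink-adjacent arcs cross a given cut, separating out the constant $\sum_{i\in U}\max(\lambda d_i,0)$, and handling the sign in $\max(a,0)-\max(-a,0)=a$ so that it holds for negative $\lambda$ as well (here $d_i\ge 0$, but the identity is valid for any real $a$). Everything else is routine. As an alternative route for the $\lambda<0$ case one could instead reduce to the $\lambda\ge 0$ result by exchanging the roles of $(s,L^+,S)$ and $(t,L^-,\overline{S})$ and replacing $\lambda$ with $-\lambda$, using $C(S,\overline{S})=C(\overline{S},S)$ and $\sum_{i\in S} d_i=d(V)-\sum_{i\in\overline{S}} d_i$; but the direct capacity computation above is cleaner and handles all $\lambda$ at once.
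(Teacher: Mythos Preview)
Your proof is correct and follows the same underlying strategy as the paper: compute the capacity of an arbitrary finite $(s,t)$-cut and show it equals the objective of (\ref{eq:Classification-HNC}) plus a constant independent of $S$. The paper, however, case-splits on the sign of $\lambda$, deferring $\lambda\ge 0$ to \citet{hochbaum2009polynomial,hochbaum2013polynomial} and treating only $\lambda<0$ explicitly (where $c_{si}=0$ and $c_{it}=|\lambda|d_i$, so the $\max$'s disappear). Your use of the identity $\max(a,0)-\max(-a,0)=a$ to collapse the $U\cap S$ sums handles both signs at once and yields the single formula $C_{st}'(S,\overline{S}) = C(S,\overline{S})-\lambda\sum_{i\in S}d_i + \text{const}$; this is cleaner and in fact anticipates the manipulation the paper later uses in the proof of Theorem~\ref{thm:CHNC-cut}. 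You also make explicit the feasibility correspondence (finite cuts $\leftrightarrow$ $L^+\subseteq S\subseteq V\setminus L^-$) and the existence of a finite cut, which the paper leaves implicit.
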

\begin{proof}
The graph $G_{st}^{'}(\lambda)$ is displayed in Figure \ref{fig:HNC-st}.

As noted previously, for $\lambda \geq 0$, $G_{st}^{'}(\lambda)$ is equivalent to $G_{st}(\lambda)$. The statement in the lemma for this case was proved in \cite{hochbaum2009polynomial, hochbaum2013polynomial}.  Therefore, it remains to address the case of $\lambda < 0$.

For $\lambda < 0$, in $G_{st}^{'}(\lambda)$, $c_{si} = 0$ and $c_{it} = |\lambda|d_i$ for each $i \in U$. We consider a particular finite $(s,t)$-cut $(S, \overline{S})$ of $G_{st}^{'}(\lambda)$. This cut's capacity is equal to $\sum_{i \in \{s\} \cup S} \sum_{j \in \{t\} \cup \overline{S}} c_{ij} = \sum_{i \in S} \sum_{j \in \overline{S}} c_{ij} + \sum_{i \in S} c_{it} + \sum_{j \in \overline{S}} c_{sj}$. Consider the three terms in this expression. 

The first term is $\sum_{i \in S} \sum_{j \in \overline{S}} c_{ij} = \sum_{i \in S} \sum_{j \in \overline{S}} w_{ij}$. Since $S$ consists of two disjoint subsets $L^+$ and $S \cap U$, the second term can be rewritten as
\begin{align*}
    \sum_{i \in S} c_{it} &= \sum_{i \in L^+} c_{it} + \sum_{S \cap U} c_{it} = \sum_{i \in L^+} 0 + \sum_{i \in S \cap U} |\lambda| d_i\\
    &= (\sum_{i \in L^+}|\lambda| d_i  + \sum_{i \in S \cap U} |\lambda| d_i) - \sum_{i \in L^+}|\lambda| d_i\\
    &= \sum_{i \in S} |\lambda| d_i - \sum_{i \in L^+}|\lambda| d_i
\end{align*}

The third term is $\sum_{j \in \overline{S}} c_{sj}$ which is equal to zero because $\overline{S}$ consists of $L^-$ and $\overline{S} \cap U$, and nodes in $L^-$ are not connected to $s$ while $c_{sj} = 0$ for $j \in U$.

Hence, the capacity of the cut is equal to
$\sum_{i \in S} \sum_{j \in \overline{S}} w_{ij} + \sum_{i \in S} |\lambda|d_i - \sum_{i \in L^+}|\lambda| d_i = C(S, \overline{S}) + \sum_{i \in S} |\lambda|d_i - \sum_{i \in L^+}|\lambda| d_i$, which is equal to $C(S, \overline{S}) - \sum_{i \in S} \lambda d_i + \sum_{i \in L^+}\lambda d_i$ when $\lambda < 0$. Notice that the last term, $\sum_{i \in L^+}\lambda d_i$, is constant as it is independent of the partition $(S, \overline{S})$. Hence, minimizing this cut's capacity is equivalent to minimizing $C(S, \overline{S}) - \sum_{i \in S} \lambda d_i$, which is the objective function of (\ref{eq:Classification-HNC}).

Hence, given  a minimum cut in $G_{st}^{'}(\lambda)$, $(S^*, \overline{S^*})$, the partition $(S^*, \overline{S^*})$ is the optimal solution to Problem (\ref{eq:Classification-HNC}). \hfill \qedsymbol


\end{proof}

\begin{figure}
    \centering
    \includegraphics[width=0.5\linewidth]{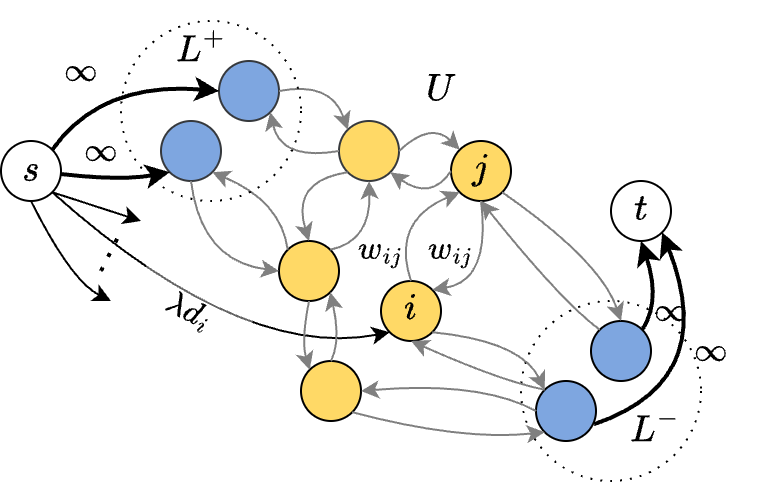}
    \caption{Associated graph $G_{st}(\lambda)$ whose minimum cut provides a solution for (\ref{eq:Classification-HNC}) when $\lambda \geq 0$.}
    \label{fig:HNC-s}
\end{figure}
\begin{figure}
    \centering
    \includegraphics[width=0.5\linewidth]{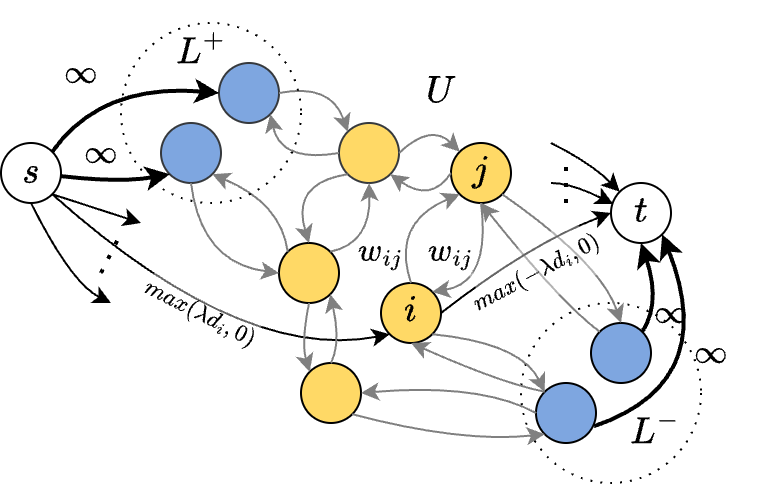}
    \caption{Associated graph $G^{'}_{st}(\lambda)$ whose minimum cut provides a solution for (\ref{eq:Classification-HNC}), generalized for both $\lambda > 0$ and $\lambda < 0$}
    \label{fig:HNC-st}
\end{figure}

We remark here that there are no separate training and testing steps in HNC. HNC partitions the entire set of samples $L \cup U$ together at the same time. 
The prediction of unlabeled samples in $U$ is based on the partition result. This differentiates HNC, as a transductive method, from inductive methods in which a classifier is trained on a given data in the training step and used for prediction of any unseen unlabeled samples in a later stage while HNC provides predictions specifically for unlabeled samples in $U$.




\subsection{Confidence HNC (CHNC)} \label{subsec:chnc}
The idea of CHNC is to allow labeled samples to belong to the opposite set with penalty.
Instead of serving as a seed node for the positive or the negative prediction set, there is a 
confidence weight, $\gamma_i$, for labeled sample $i$, that reflects how {confident} we are in the label quality of sample $i$. 
(The confidence weights computation is explained in Section \ref{section:confidence}.) Before we introduce the formulation of CHNC, we first consider an alternative, but equivalent, formulation of HNC (\ref{eq:Classification-HNC}) in the following lemma.

\begin{lemma} Problem (\ref{eq:Classification-HNC}) is equivalent to
\begin{equation} \label{eq:Classification-HNC1}
    \minimize_{ L^{+} \subseteq S \subseteq V\setminus L^{-} } \; C(S, \overline{S}) - \lambda \sum_{i \in S\cap U} d_i 
\end{equation} 
\end{lemma}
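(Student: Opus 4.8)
The plan is to show that the two objective functions differ only by an additive constant over a common feasible region, so that they share the same set of optimal solutions. First I would note that the feasible regions of (\ref{eq:Classification-HNC}) and (\ref{eq:Classification-HNC1}) are literally the same: in both cases $S$ ranges over all sets with $L^{+} \subseteq S \subseteq V \setminus L^{-}$. So it suffices to compare the two objectives pointwise on this region.

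Next, fix any feasible $S$. Since $V = L^{+} \cup L^{-} \cup U$ is a disjoint union, and $S$ satisfies $L^{+} \subseteq S$ and $S \cap L^{-} = \varnothing$, we get the disjoint decomposition $S = L^{+} \cup (S \cap U)$. Hence $\sum_{i \in S} d_i = \sum_{i \in L^{+}} d_i + \sum_{i \in S \cap U} d_i$, where the first term $\sum_{i \in L^{+}} d_i$ is a constant independent of the choice of $S$ (this is precisely where the constraint $L^{+} \subseteq S$ is used: $L^{+}$ is forced into every feasible solution, so its volume never changes). Substituting this into the objective of (\ref{eq:Classification-HNC}) gives $C(S,\overline{S}) - \lambda \sum_{i \in S \cap U} d_i - \lambda \sum_{i \in L^{+}} d_i$, which is exactly the objective of (\ref{eq:Classification-HNC1}) minus the constant $\lambda \sum_{i \in L^{+}} d_i$.

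Finally, since adding a constant to an objective does not change its set of minimizers, and the feasible regions coincide, $S^{*}$ is optimal for (\ref{eq:Classification-HNC}) if and only if it is optimal for (\ref{eq:Classification-HNC1}), which is the claimed equivalence. I do not anticipate any genuine obstacle in this argument; the only point requiring a moment of care is justifying that $\sum_{i \in L^{+}} d_i$ is truly constant over the feasible set rather than solution-dependent, which follows immediately from the forced inclusion $L^{+} \subseteq S$.
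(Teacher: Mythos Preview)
Your proposal is correct and follows essentially the same approach as the paper: both decompose $S$ (using $L^{+}\subseteq S$ and $S\cap L^{-}=\varnothing$) as $L^{+}\cup(S\cap U)$, observe that $\sum_{i\in L^{+}}d_i$ is a constant independent of $S$, and conclude that the two objectives differ by an additive constant over the identical feasible region. Your write-up is slightly more explicit about the feasible regions coinciding and about why the constant is solution-independent, but the argument is the same.
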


\begin{proof}
$S$ is the union of $S \cap U$, $S \cap L^+$ and $S \cap L^-$, which are disjoint. $S \cap L^+ = L^+$ since $L^+ \subseteq S$ and $S \cap L^- = \varnothing$ since $S \subseteq V \backslash L^-$.  Thus, $\sum_{i \in S} d_i = \sum_{i \in S\cap U} d_i + \sum_{i \in L^+} d_i$. As $L^+$ is given, $\sum_{i \in L^+} d_i$ is a constant. Hence, minimizing $- \lambda \sum_{i \in S} d_i$ is equivalent to minimizing $- \lambda \sum_{i \in S \cap U} d_i$.  \hfill \qedsymbol
\end{proof}

In problem (\ref{eq:Classification-HNC}), and (\ref{eq:Classification-HNC1}), the constraints $L^{+} \subseteq S \subseteq V\setminus L^{-}$ can be incorporated in the objective function by giving infinite penalty for violating them. Formally, we assign penalty weights $\gamma_i$ to all $i\in L$.
For penalty weights $\gamma_i$ equal to $\infty$, or a very large number that is an upper bound on the objective value, problem  (\ref{eq:Classification-HNC1}) is written as,
\begin{equation}
\minimize_{ \emptyset \subset S \subset V } \; C(S, \overline{S}) - \lambda \sum_{i \in S\cap U} d_i + \sum_{i \in \overline{S} \cap L^{+}} \gamma_i +  \sum_{j \in S\cap L^{-}} \gamma_j.
\label{eq:CHNC}
\end{equation}

The CHNC problem that we consider here is (\ref{eq:CHNC}) when the penalty weights $\gamma_i$ are allowed to be finite, and therefore the solution may violate some of the constraints $L^{+} \subseteq S \subseteq V\setminus L^{-}$.  In that sense, HNC is a special case of CHNC.





The associated graph $G_{st}^{c}(\lambda)$ whose minimum $(s,t)$-cut provides an optimal solution for (\ref{eq:CHNC}) 
is given in Figure \ref{fig:chnc-graph}. 
The difference between the graph $G_{st}^{'}(\lambda)$ for HNC and $G_{st}^{c}(\lambda)$ for CHNC  is in replacing the infinite capacity weights of arcs in $\{(s,i) | i \in L^+\}$ and $\{(i,t) | i \in L^-\}$ by the \textit{confidence weights}, $\gamma_i$ for node $i$.
We next show that, indeed, the minimum $(s,t)$-cut of $G_{st}^c(\lambda)$ provides the optimal solution to the CHNC problem (\ref{eq:CHNC}).

\begin{figure}
    \centering
    \includegraphics[width=0.5\linewidth]{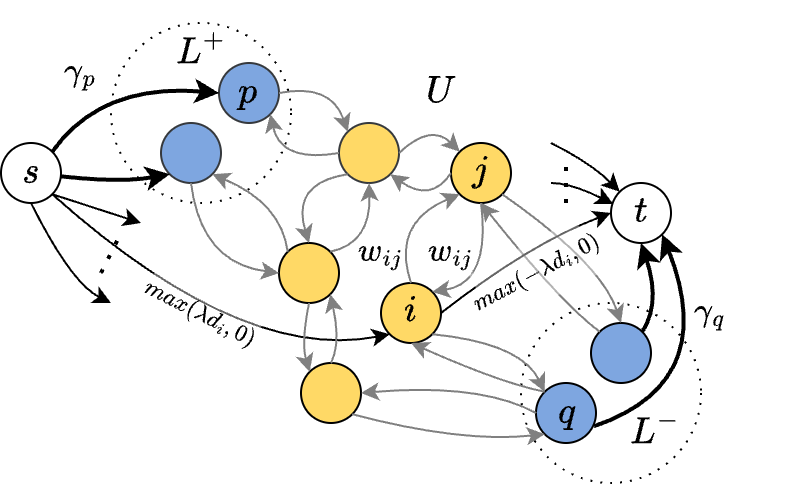}
    \caption{Associated graph $G_{st}^c(\lambda)$ for CHNC (\ref{eq:CHNC})}
    \label{fig:chnc-graph}
\end{figure}

\begin{theorem}\label{thm:CHNC-cut}
Let $(S^*, \overline{S^*})$ be a minimum $(s,t)$-cut for the Confidence HNC graph $G^c_{st}(\lambda)$. Then, $(S^*, \overline{S^*})$ is an optimal solution to CHNC (\ref{eq:CHNC}).
\end{theorem}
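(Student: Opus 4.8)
The plan is to follow the same template as the proof of Lemma~\ref{lemma:new-graph}: take an arbitrary $(s,t)$-cut $(S,\overline S)$ of $G^c_{st}(\lambda)$, write out its capacity term by term, and show that it equals the objective of (\ref{eq:CHNC}) evaluated at $S$ plus a constant independent of $S$. Note that, unlike the HNC graphs, every $(s,t)$-cut of $G^c_{st}(\lambda)$ is finite, since all arc capacities --- the $w_{ij}$, the confidence weights $\gamma_i$, and the $\max(\pm\lambda d_i,0)$ --- are finite; so there is no need to restrict to finite cuts.

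First I would decompose the cut capacity using the partition $V = U \sqcup L^+ \sqcup L^-$ and the three arc groups $A$, $A_s$, $A_t$:
\[
C_{st}(S,\overline S) \;=\; \sum_{i\in S}\sum_{j\in \overline S} c_{ij} \;+\; \sum_{j\in\overline S} c_{sj} \;+\; \sum_{i\in S} c_{it}.
\]
The first term is $\sum_{i\in S}\sum_{j\in\overline S} w_{ij} = C(S,\overline S)$. A source arc $(s,i)$ is cut exactly when $i\in\overline S$, and since $A_s$ is incident only to $L^+\cup U$, the second term splits as $\sum_{i\in\overline S\cap L^+}\gamma_i + \sum_{i\in\overline S\cap U}\max(\lambda d_i,0)$. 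Symmetrically, a sink arc $(i,t)$ is cut exactly when $i\in S$, and the third term is $\sum_{j\in S\cap L^-}\gamma_j + \sum_{i\in S\cap U}\max(-\lambda d_i,0)$. The two $\gamma$-sums already coincide with the penalty terms $\sum_{i\in\overline S\cap L^+}\gamma_i$ and $\sum_{j\in S\cap L^-}\gamma_j$ in (\ref{eq:CHNC}).

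It then remains to collapse the two $U$-sums into the linear term $-\lambda\sum_{i\in S\cap U} d_i$. Using the elementary identity $\max(x,0)-\max(-x,0)=x$ (so $\max(\lambda d_i,0)-\max(-\lambda d_i,0)=\lambda d_i$, as $d_i\ge 0$), I would rewrite
\[
\sum_{i\in\overline S\cap U}\max(\lambda d_i,0) + \sum_{i\in S\cap U}\max(-\lambda d_i,0)
= \sum_{i\in U}\max(\lambda d_i,0) - \lambda\sum_{i\in S\cap U} d_i ,
\]
where $\sum_{i\in U}\max(\lambda d_i,0)$ is a constant independent of $S$. Assembling the pieces yields $C_{st}(S,\overline S) = \bigl[\,C(S,\overline S) - \lambda\sum_{i\in S\cap U} d_i + \sum_{i\in\overline S\cap L^+}\gamma_i + \sum_{j\in S\cap L^-}\gamma_j\,\bigr] + \sum_{i\in U}\max(\lambda d_i,0)$; that is, the cut capacity equals the objective of (\ref{eq:CHNC}) at $S$ plus a fixed constant. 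Hence a partition minimizes the cut capacity if and only if it minimizes the CHNC objective, and in particular the minimum cut $(S^*,\overline{S^*})$ gives an optimal solution of (\ref{eq:CHNC}).

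The computation itself is routine --- essentially the $\lambda\ge 0$ and $\lambda<0$ cases of Lemma~\ref{lemma:new-graph} handled simultaneously, with the infinite seed-arc capacities replaced by the finite $\gamma_i$ --- so I do not expect a genuine obstacle in the algebra. The one place that needs a word of care is the feasible region: (\ref{eq:CHNC}) minimizes over $\varnothing\subset S\subset V$, whereas the minimum cut ranges over all $S\subseteq V$. Because the $\gamma_i$ are finite, the trivial partitions $S=\varnothing$ and $S=V$ are no longer automatically excluded (in HNC they were ruled out by $L^+\ne\varnothing$ and $L^-\ne\varnothing$ together with the infinite seed arcs). So I would add a short remark that the capacity identity above extends verbatim to $S\in\{\varnothing,V\}$, so whether or not the trivial partitions are admitted the correspondence is unchanged, and that in the regime of interest --- with at least one sufficiently reliable positive seed and one sufficiently reliable negative seed --- a minimum cut is non-trivial.
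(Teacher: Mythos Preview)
Your proof is correct and follows essentially the same approach as the paper's: both decompose the cut capacity into the inter-set term $C(S,\overline S)$, the two $\gamma$-penalty sums, and the two $U$-sums involving $\max(\pm\lambda d_i,0)$, then reduce the latter to $-\lambda\sum_{i\in S\cap U}d_i$ plus a constant (the paper does this in two substitution steps, you in one via $\max(x,0)-\max(-x,0)=x$, arriving at the same constant $\sum_{i\in U}\max(\lambda d_i,0)$). Your added remark on the trivial partitions $S\in\{\varnothing,V\}$ is a point the paper does not address.
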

\begin{proof}
Let ($S, \overline{S}$) be a finite $(s,t)$-cut on $G^c_{st}(\lambda)$. 
The capacity of this cut is given by

\begin{equation*}
\begin{aligned}
    C_{st}(S, \overline{S}) = &\sum_{i\in S} \sum_{j\in \overline{S}} w_{ij} + \sum_{j\in U\cap \overline{S}} \max(\lambda d_j,0) + \sum_{j\in U\cap S} \max(-\lambda d_j,0) \\
    &+ \sum _{i\in L^+\cap \overline{S}} \gamma_i + \sum _{k \in L^- \cap S} \gamma_k
    \end{aligned}
\end{equation*}

The sum of the second and third terms can be written as
\begin{equation*}
\begin{aligned}
&\sum_{j\in U\cap \overline{S}} \max(\lambda d_j,0) + \sum_{j\in U\cap S} \max(-\lambda d_j,0)\\
&= \sum_{j\in U\cap \overline{S}} (\lambda d_j + \max(-\lambda d_j, 0)) + \sum_{j\in U\cap S} \max(-\lambda d_j,0) \\
&= \sum_{j\in U\cap \overline{S}} \lambda d_j + \sum_{j \in U} \max(-\lambda d_j, 0)
\end{aligned}
\end{equation*}
where $\sum_{j \in U} \max(-\lambda d_j, 0)$ is a constant since $U$ is given. 

Moreover, $\sum_{j\in U\cap \overline{S}} \lambda d_j = \sum_{j \in U} \lambda d_j - \sum_{j\in U\cap S} \lambda d_j$. Again, the first term, $\sum_{j \in U} \lambda d_j$, is a constant. Hence, $\sum_{j\in U\cap \overline{S}} \max(\lambda d_j,0) + \sum_{j\in U\cap S} \max(-\lambda d_j,0)$ is a sum of $- \sum_{j\in U\cap S} \lambda d_j$ and a constant. Therefore, minimizing $C_{st}(S, \overline{S})$, as expressed at the beginning of the proof, is equivalent to minimizing 
\[C(S,\overline{S}) -\lambda\sum_{j\in U\cap S} d_j + \sum _{i\in L^+\cap \overline{S}} \gamma_i + \sum _{k \in L^- \cap S} \gamma_k\]
which is the CHNC problem (\ref{eq:CHNC}).
\hfill \qedsymbol
\end{proof}

Therefore, we find the optimal $S^*$ for (\ref{eq:CHNC}) by solving for the minimum $(s,t)$-cut of the graph $G^c_{st}(\lambda)$, which is the associated graph of CHNC. Unlabeled samples that are in $S^{*}$ are predicted positive and other unlabeled samples, those in $\overline{S^*}$, are predicted negative.

To determine the value of the parameter $\lambda$, we use the $k$-fold cross validation, applied on the labeled set $L$, to select the ``best'' value of $\lambda$ that yields the best validation accuracy. 

\section{Confidence Weights Computation}\label{section:confidence}

We describe here the method for computing the confidence weights of labeled samples in $L:=L^+ \cup L^-$. The method consists of two separate stages: one to compute the confidence weights of positive labeled samples in $L^+$ and the other to compute the confidence weights of negative labeled samples in $L^-$. Each of the two steps involves solving a problem similar to HNC (\ref{eq:Classification-HNC}) for a sequence of values of the parameter $\lambda$ and relies on the nested cut property, Lemma \ref{lemma:nestedcut}. 

First, we remark that the graph associated to HNC (\ref{eq:Classification-HNC}), or graph $G_{st}^{'}(\lambda)$, is a parametric flow graph, which is defined in Section \ref{sec:notation}. It follows that the nested cut property applies to $G_{st}^{'}(\lambda)$. 
To see that $G_{st}^{'}(\lambda)$ is a parametric flow graph, note that the source adjacent arcs $(s,i)$ have capacities of $\max(\lambda d_i, 0)$ for $i \in U$ or $\infty$ for $i \in L^+$. In either case, it is non-decreasing with respect to $\lambda$. The sink adjacent arcs $(i,t)$ have capacities $\max(-\lambda d_i, 0)$ for $i \in U$ or $\infty$ for $i \in L^-$, which are non-increasing as a function of $\lambda$. Other arc capacities are independent of $\lambda$. Therefore, $G_{st}^{'}(\lambda)$ is a parametric flow graph.

The graphs that we will consider in both stages of the confidence weights computation are special cases of $G_{st}^{'}(\lambda)$. Hence, these graphs are parametric flow graphs, and the nested cut property applies to both of them.

\subsection{Confidence weights computation of positive labeled samples}\label{subsec:confw-pos}
To evaluate the reliability of the labels of samples in $L^+$, which are given as positive, in this stage, we treat them as unlabeled and solve for their \textit{new} labels via the HNC problem. Formally, we consider a problem instance where the set $L^-$ is the seed set for the negative prediction set $\overline{S}$, and there is \textit{no} seed set for the positive prediction set $S$. $L^+$ and $U$ together form the new unlabeled set. The HNC problem considered in this stage is 

\begin{equation} \label{eq:compute-positive-confidence}
    \minimize_{ S \subseteq V\setminus L^{-} } \; C(S, \overline{S}) - \lambda \sum_{i \in S} d_i
\end{equation}

The associated graph to (\ref{eq:compute-positive-confidence}) is the graph $G^{+}(\lambda)$, displayed in Figure \ref{fig:HNC-positive}. $G^{+}(\lambda)$ is a special case of $G_{st}^{'}(\lambda)$, and hence, it is a parametric flow graph. As a result, the nested cut property applies to $G^{+}(\lambda)$. 

\begin{figure}
     \centering
     \begin{subfigure}{0.49\textwidth}
         \centering
         \includegraphics[width=\linewidth]{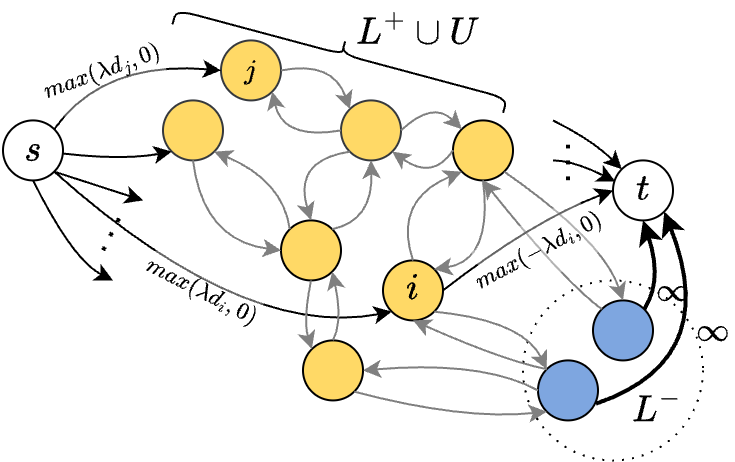}
         \caption{$G^{+}(\lambda)$ for the computation of confidence weights of positive labeled samples in $L^+$}
         \label{fig:HNC-positive}
     \end{subfigure} \hfill
     \begin{subfigure}{0.49\textwidth}
         \centering
         \includegraphics[width=\linewidth]{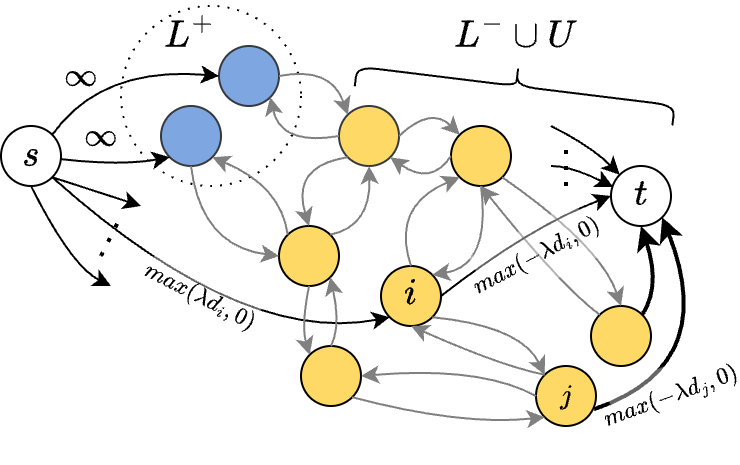}
         \caption{$G^{-}(\lambda)$ for the computation of confidence weights of negative labeled samples in $L^-$}
         \label{fig:HNC-negative}
     \end{subfigure}
     \caption{Associated graphs for the confidence weights computation}
     \label{fig:compute-confidence}
\end{figure}

Suppose we are given a sequence of $q$ increasing values of $\lambda$: $\lambda_1 < \lambda_2 < \dots < \lambda_q$. Denote the optimal solution to (\ref{eq:compute-positive-confidence}) for $\lambda_k$ by $(S_k, \overline{S}_k)$. The nested cut property implies $S_1 \subseteq S_2 \subseteq \dots \subseteq S_q$. For a sufficiently small value (a highly negative value) of $\lambda_1$, $(S_1, \overline{S}_1) = (\varnothing, V)$. As $\lambda$ increases, the positive prediction set $S$ expands in a nested manner. Each sample $i$ in $L^+ \cup U$ moves from $\overline{S}_{q_i}$ to $S_{q_i+1}$ for a certain $q_i \in \{1,\dots,q\}$. With a sufficient large value $\lambda_q$, $(S_q, \overline{S}_q) = (L^+\cup U, L^-)$.


Consider two positive labeled samples, $i$ and $j$, in $L^+$ such that $q_i < q_j$, where $q_i$ is defined as $\max \{k \; | \; i \in \overline{S}_k, 1\leq k \leq q\}$. By this definition, in the sequence of optimal partitions corresponding to $q$ increasing values of $\lambda$, sample $i$ belongs to $\overline{S}_1, \overline{S}_2, \dots, \overline{S}_{q_i}, S_{q_i+1} \dots S_{q_j}, S_{q_j+1} \dots S_q$. On the other hand, sample $j$ belongs to $\overline{S}_1, \overline{S}_2, \dots, \overline{S}_{q_i}, \overline{S}_{q_i+1} \dots \overline{S}_{q_j}, S_{q_j+1} \dots S_q$. The two samples belong to different sets in the optimal partitions $(S_k, \overline{S}_k)$ for $k \in [q_i+1, q_j]$, in which sample $i$ always belongs to the positive prediction set, $S_k$, and sample $j$ belongs to the negative prediction set, $\overline{S}_k$. We infer from this distinction that for two positive labeled samples $i$ and $j$ where $q_i < q_j$, the sample $i$ is more likely to be positive than sample $j$. That is, the positive label of sample $i$ is more reliable than that of sample $j$.

Based on this, we compute the confidence weight of a positive labeled sample $i \in L^+$ as $\frac{|\overline{S}_{q_i}|}{|L^+ \cup U|}$. This confidence weight varies between $0$ and $1$. For samples $i$ and $j$ such that $q_i < q_j$, the proposed confidence weight of $i$ is larger than that of $j$ since $|\overline{S}_{q_i}| > |\overline{S}_{q_j}|$.

Problem (\ref{eq:compute-positive-confidence}) is solved for the sequence $\lambda_1 < \lambda_2 < \dots < \lambda_q$ as a parametric minimum cut problem using the pseudoflow algorithm of \citep{hochbaum2008pseudoflow}. We discuss in Section \ref{sec:implementation} about a specific implementation of the pseudoflow algorithm as well as the values of $\lambda_1, \lambda_2, \dots, \lambda_q$ that we use in this work.

\subsection{Confidence weights computation of negative labeled samples}

To compute the confidence weights of negative labeled samples in $L^-$, we apply a procedure similar to Section \ref{subsec:confw-pos} with $L^+$ taking the role of $L^{-}$ and vice versa. That is, we solve the HNC problem where $L^+$ is the seed set for the positive prediction set $S$ and there is no seed set for the negative prediction set $\overline{S}$. Negative labeled samples in $L^-$ and unlabeled samples in $U$ together form the new unlabeled set. 

\begin{equation} \label{eq:compute-negative-confidence}
    \minimize_{L^+ \subseteq S \subseteq V} \; C(S, \overline{S}) - \lambda \sum_{i \in S} d_i
\end{equation}

The graph associated with (\ref{eq:compute-negative-confidence}) is $G^{-}(\lambda)$, displayed in Figure \ref{fig:HNC-negative}. The nested cut property also applies to $G^{-}(\lambda)$ since it is a parametric flow graph. For a given sequence of values of $\lambda$: $\lambda_1 < \lambda_2 < \dots < \lambda_q$, where $\lambda_1$ is sufficiently small and $\lambda_q$ is sufficiently large, the corresponding sequence of optimal solution begins with $(S_1, \overline{S}_1) = (L^+, L^- \cup U)$ and ends with $(S_q, \overline{S}_q) = (V, \varnothing)$. Between the two solutions is a nested sequence of minimum $(s,t)$-cut solutions.

Consider two negative labeled samples $i$ and $j$ in $L^-$ such that $q_i < q_j$ where $q_i:=\max \{k \; | \; i \in \overline{S}_k, 1\leq k \leq q\}$. Samples $i$ and $j$ belong to the same partition of the optimal solutions to (\ref{eq:compute-negative-confidence}) for all tradeoff values, except for $\lambda_k$ for all $k$ such that $q_i \leq k < q_j$. For such $\lambda_k$, sample $i$ belongs to the positive prediction set $S_k$, while sample $j$ remains in another partition, $\overline{S}_k$, that is the negative prediction set. This implies that sample $j$ is more likely to be negative than sample $i$. Hence, for sample $j$ with a larger value of $q_j$, we have a higher confidence in the given negative label of sample $j$.  Based on this rationale, the confidence weight of a negative labeled sample $i \in L^-$ is computed as $\frac{|S_{q_i}|}{|L^- \cup U|}$, which increases as $q_i$ goes up. 

\subsection{Scaling confidence weights}
The confidence weights are computed as explained in the procedures above: the confidence weight of sample $i$ is $\gamma_i = \frac{|\overline{S}_{q_i}|}{|L^+ \cup U|}$ for $i \in L^+$ and $\gamma_i = \frac{|S_{q_i}|}{|L^- \cup U|}$ for $i \in L^-$. These values are in the range of $[0,1]$.

The confidence weights are used as arc capacities for arcs that connect labeled samples and either the source or the sink node, in the graph $G_{st}^c(\lambda)$ in Figure \ref{fig:chnc-graph}.
However, the magnitude of other arcs in the graph is not necessarily in the range of $[0,1]$. We then scale the confidence weights accordingly by a factor of $\theta$ that is equal to the average of $\{w_{ij} \; | \; (i,j) \in A\}$. Hence, the scaled confidence weights of a labeled sample $i$ is $\gamma_i = \theta \cdot \frac{|\overline{S}_{q_i}|}{|L^+ \cup U|}$ for $i \in L^+$ and $\gamma_i = \theta \cdot \frac{|S_{q_i}|}{|L^- \cup U|}$ for $i \in L^-$.

\section{Implementation Details of Confidence HNC}\label{sec:implementation}

This section provides the specification of several implementation details. First, we give a description of the parametric minimum cut solver used in both the confidence weights computation and the tuning of the parameter $\lambda$ in solving CHNC (\ref{eq:CHNC}). Second, we set up the graph representation of the data, provide a motivation for using graph sparsification, and show how it is achieved with the k-nearest neighbor sparsification.  Third, we describe the pairwise similarity measure between samples that we used in this work.

\subsection{Solving Parametric Minimum Cut Problems}

\citet{hochbaum2008pseudoflow} provided two versions of the parametric pseudoflow algorithm for solving the parametric minium cut problem.
The first version is a \textit{fully} parametric cut solver that identifies all values of $\lambda$ that give different minimum cut solutions.  Because of the nestedness property, there can be at most $n=|V|$ different such values. In the other version, the \textit{simple} parametric cut solver, the minimum $(s,t)$-cut is found for a given list of $\lambda$ values. Both versions are ``parametric" in that they have the complexity of a single minimum $(s,t)$-cut procedure. 
In this work, we use the implementation\footnote{\url{https://riot.ieor.berkeley.edu/Applications/Pseudoflow/parametric.html}} that is a simple parametric cut solver. 
The list of $\lambda$ values that we use is $-1, -0.998, -0.996,$ $\cdots, -0.004, -0.002, 0,0.002, $ $0.004, \cdots, 0.996, 0.998, 1$. 

The parametric minimum cut procedure is also used for 
solving the CHNC problem (\ref{eq:CHNC}) after the confidence weights $\{\gamma_i\}_{i=1}^n$ have been obtained. In CHNC (\ref{eq:CHNC}), $\lambda$ is an unknown parameter. In order to pick a ``best'' $\lambda$, we perform 5-fold cross validation to select it from a given parameter candidates list. In order to evaluate different values of $\lambda$ in the list, we solve the problem for all given $\lambda$s as the parametric minimum cut problem using the implementation mentioned above, in the complexity of a single min-cut procedure.

\subsection{Graph Sparsification}
The default setting of the graph for a general data set would include similarity weights for all pairs of samples in $L \cup U$ resulting in a complete graph. However, minimum cut problem on dense graphs tend to favor highly unbalanced partitions due to the small number of edges that connect a small set in the unbalanced partition to its complement. 
To see this, consider a balanced cut in a complete graph of $n$ vertices that contains half of the vertices in the source set and the other half in the sink set. The number of edges between the two sets in the partition is $\frac{n^2}{4}$. Even if the capacities on these edges are small, their sum may accumulate to a cut capacity that is much larger than that of an unbalanced cut, especially when $n$ is large. 

To avoid this issue, we apply the k-nearest neighbor sparsification, which has been used in prior works such as \citet{blum2001learning}, \citet{blum2004semi} and \citet{wang2013semi}. Under th kNN sparsification, nodes $i$ and $j$ are connected if $i$ is among the $k$ nearest neighbors of $j$, or $j$ is among those of $i$.

Another benefit of sparsification is that it reduces the size of the graph in terms of the number of arcs and, hence, the runtime.

In this work, we use $k=15$ on all data of size smaller than $10000$ samples. For larger data size, we use a smaller value of $k$, $k=10$, to enhance the effect of the sparsification, which is needed on a graph with large $n$.

\subsection{Pairwise Similarities Computation}

The pairwise similarities used as edge weights are set to depend on the distance between the feature vectors of the respective samples. Given the distance between samples $i$ and $j$, $dist(i,j)$, we use the Gaussian weight $w_{ij} = exp(-\frac{dist(i,j)}{2\sigma^2})$.
Gaussian weight is commonly used, e.g. in \citet{chen2009similarity, zhu2009introduction}.
For data of size smaller than $10000$, we use $\sigma=0.75$. For larger datasets, we use a smaller value of $\sigma = 0.5$ to bring the similarity weights for large pairwise distances closer to zero, for the same purpose as the graph sparsification.

To compute the distance, we use the weighted Euclidean distance where features are weighted according to \textit{the feature importance} obtained from the random forest classifier \citep{breiman2001random} as implemented on Scikit-Learn \citep{scikit-learn}. This step is incorporated so that the computed pairwise similarity reflects relationship between samples more accurately. 

All parameters in the random forest model are set to their default values, except for \textit{min\_samples\_leaf}, or the minimum fraction of samples required in the tree node to be split, that is tuned, via 5-fold cross validation. The values of \textit{min\_samples\_leaf} that are included in cross validation are $0.001, 0.002, 0.005$ and $0.01$. The importance of each feature given by the random forest is computed based on the impurity reduction attributed to that feature.  In other words, a feature with high importance is highly effective at distinguishing samples from different classes.

Consider a dataset whose feature representations of the samples are of $H$ dimensions. Denote the feature importance from the random forest classifier of the $h$-th feature by $\rho_h \in [0,1]$ for $h \in H$. The distance between samples $i$ and $j$, or $dist(i,j)$, is computed as $dist(i,j) = \sqrt{\sum_{h=1}^H \rho_h (x_{ih}-x_{jh})^2}$ where $x_{ih}$ is the value of the $h$-th feature of samples $i$.
Note that the importances of all features are normalized so that $\sum_{h=1}^H \rho_h = H$, to make it comparable to the unweighted distance in which $\rho_h =1$ for all $h$ and the sum of the importance weights across all features is equal to $H$.


\section{Experiments} \label{sec:exp}

To evaluate our model, we compare its classification performance and noise detection capability with three other classification methods on both synthetic and real datasets.

\subsection{Benchmark Methods}

We compare Confidence HNC with the benchmark methods:  DivideMix (DM) \citep{li2020dividemix}; Confident Learning (CL) \citep{northcutt2021confident}; and Co-teaching+ (CT+) \citep{yu2019does}. These methods have been widely used as benchmarks in works on classification with label noise and have shown competitive results.

We use the available implementations of DivideMix \footnote{\url{https://github.com/LiJunnan1992/DivideMix}}, Confident Learning\footnote{\url{https://docs.cleanlab.ai/stable/index.html}} and Co-teaching+\footnote{\url{https://github.com/xingruiyu/coteaching_plus}}. The implementation of Confident Learning is part of a Python library called Cleanlab.
For these models,
we use the default choices of hyperparameters with some exceptions, when alternative parameter values that we tried produced better results: For DivideMix, we use the default learning rate of 0.02, except for Phishing and Adult data where we use the learning rate of 0.002. The batch size of 32 is used on smaller datasets (fewer than 10000 samples) and the batch size of 128 are used on larger data. The default number of epochs is 300 in the available implementation but we observed that on our data, the accuracy of DivideMix on both training and test data dropped drastically after certain numbers of epochs, even before 100 epochs. Hence, we allow an early stopping in which the training of the model terminates after the training accuracy does not reach a new high for 5 consecutive epochs. This early stopping results in better performance of DivideMix on our datasets. Regarding Co-teaching+, we use the default learning rate of 0.001 and the default number of epochs, which is 200. The batch size setting is similar to that of DivideMix.

For both of our deep-learning benchmarks, DivideMix and Co-teaching+, we use the ResNet model as the underlying network, as it has been shown as an effective network architecture for tabular data in the work of \citet{gorishniy2021revisiting}, which also provided an implementation\footnote{\url{https://github.com/yandex-research/rtdl-revisiting-models}} of the model. Our ResNet 
model consists of 10 blocks where each block consists of a batch normalization layer, a linear layer of size ($n\_features*2, n\_features*2$), a ReLU layer, a droput layer of rate 0.2, another linear layer of the same size and another dropout layer. These 10 blocks are followed by a batch normalization layer, a ReLU layer, and finally a linear layer of size ($n\_features*2$, $2$), as our data consists of two classes. 

Regarding the Confident Learning method, the method can be coupled with any classification model. Here, following the tutorial of Confident Learning on CleanLab,
we use the randomized decision trees, also known as extra-trees, as the underlying classification model. We choose the number of trees in the model to be 100.

\subsection{Datasets}\label{subsection:datasets}

We test CHNC and benchmark methods on both synthetic and real datasets. Synthetic datasets are generated using Scikit-Learn \citep{scikit-learn}. The generation of synthetic data involves a number of parameters as indicated in Table \ref{tab:syntheticdata}. 
For details on these parameters, see \citet{scikit-learn}\footnote{\url{https://scikit-learn.org/stable/modules/generated/sklearn.datasets.make_classification.html}}. There are $540$ configurations of these parameters. For each configuration, we generate $4$ different datasets, resulting in $2160$ synthetic datasets. In terms of real datasets, we experimented on $10$ datasets from the UCI Machine Learning Repository \citep{Dua:2019} listed in Table \ref{tab:realdata}.

\begin{table}
\centering
\caption{Synthetic Data Configurations}
\label{tab:syntheticdata}
\vskip 0.1in
\begin{small}
\begin{sc}
\begin{tabular}{ll}
\hline
Parameter & Values \\
\hline
\# of samples & 1000, 5000, 10000\\
\# of features & 5, 10, 20 \\ 
\% of positive samples & 30\%, 40\%, 50\%, 60\%, 70\% \\
\# of clusters per class & 2, 4 \\
class separation & 0.5, 1, 2\\
centroids on & hypercube, polytope \\
\hline
\end{tabular}
\end{sc}
\end{small}
\end{table}

\begin{table}
\centering 
\caption{Data from the UCI ML repository}
\label{tab:realdata}
\vskip 0.1in
\begin{small}
\begin{sc}
\begin{tabular}{lrrr}
\hline
Name &  Size & \# Features & \%Pos \\
\hline
Vote & 435 & 16 & 61.38 \\
Breast Cancer & 569 & 30 & 37.26 \\
Maternal Health & 1014 &  6 &  59.96 \\ 
Red Wine & 1599 & 11 & 53.47\\
Obesity & 2111 & 19 & 46.04\\
Mushroom & 8124 & 112 & 51.80\\
Phishing Websites & 11055 & 30 & 55.69\\
Dry Bean & 13611 & 17 & 45.42\\
Letter & 20000 & 16 & 49.70\\
Adult & 45222 & 82 & 24.78\\
\hline
\end{tabular}
\end{sc}
\end{small}
\end{table}

\subsection{Experiment Settings}\label{subsection:exp-setting}
For each dataset, we partition data samples into labeled set, which contains $80\%$ of the samples, and unlabeled set, which contains the remaining $20\%$. The classification performance is evaluated on the class prediction of these unlabeled samples.  

To evaluate the models' robustness to label noise, we add noise to data by changing the labels of some labeled samples. Noise levels used in this experiment are $20\%$ and $30\%$ where $x\%$ noise means we change the labels of $x\%$ of samples in each class to the opposite label. Each model uses this corrupted labeled set to train the model and predict the labels of unlabeled samples. For real data, we run $5$ experiments for each of them at each noise level using different randomization of label corruption and different partitions into labeled-unlabeled sets.

\subsection{Experiments on Synthetic Data}

\subsubsection{Evaluation metrics}

For synthetic data, we summarize the comparison between CHNC and benchmark models across all 2160 datasets using the accuracy improvement.
Suppose we denote the accuracy of the model $M$ on a particular dataset $D$ by $Acc^{(D)}(M)$, then the accuracy improvement of CHNC over $M$ on dataset $D$ is equal to $(\frac{Acc^{(D)}(CHNC)}{Acc^{(D)}(M)}-1)\times 100\%$. A positive accuracy improvement implies a higher accuracy of CHNC over the model $M$. 

To test for the statistical significance of the results, we use the Wilcoxon signed-ranks test, which is commonly used in many works such as \citet{abellan2012bagging, saez2016inffc,luengo2018cnc}. It is an appropriate statistical test when we compare two classifiers over multiple datasets, and when we do not rely on the assumption that the differences are normally distributed \citep{demvsar2006statistical}. We also report the p-values of the paired t-test, which, despite the assumptions that it relies on, is a common statistical test.

\subsubsection{Results}

From the experiments on synthetic datasets, we obtain $2160$ classification accuracy scores for each classifier. We compute the accuracy improvement given by Confidence HNC at both $20\%$ and $30\%$ noise levels, and plot the histograms as shown in Figures \ref{fig:acc_imp_synthetic_noise20} and \ref{fig:acc_imp_synthetic_noise30}.

\begin{figure}
\centering
\begin{subfigure}[b]{.5\linewidth}
\centering
\includegraphics[width=\linewidth]{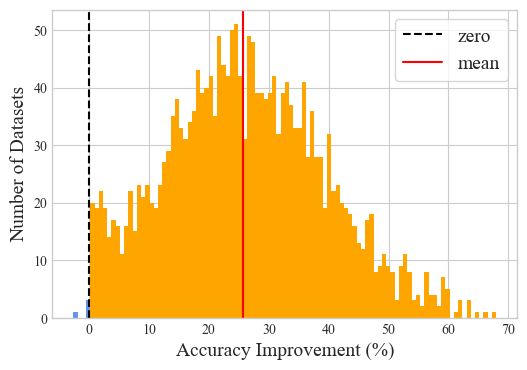}
\caption{Accuracy Improvement over DM}
\label{fig:chnc_vs_dm_20}
\end{subfigure}%
\begin{subfigure}[b]{.5\linewidth}
\centering
\includegraphics[width=\linewidth]{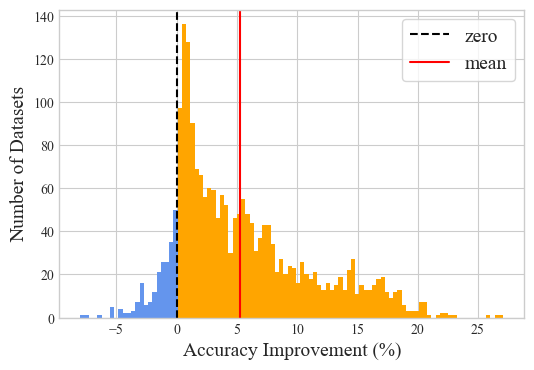}
\caption{Accuracy Improvement over CL}
\label{fig:chnc_vs_cl_20}
\end{subfigure}\\[1ex]
\begin{subfigure}{.5\linewidth}
\centering
\includegraphics[width=\linewidth]{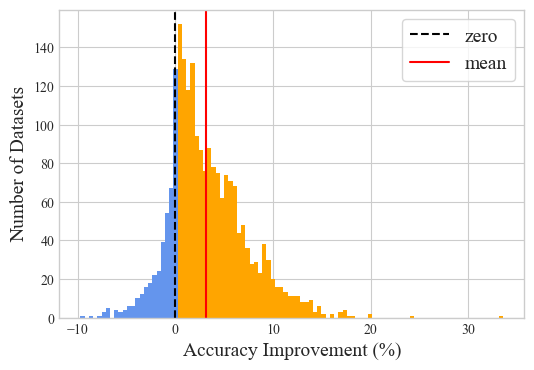}
\caption{Accuracy Improvement over CT+}
\label{fig:chnc_vs_ctp_20}
\end{subfigure}
\caption{Histograms of accuracy improvement on $2160$ synthetic datasets, with $20\%$ noise, yielded by CHNC over (a) DivideMix, (b) Confident Learning and (c) Co-teaching+. Area to the right of the dashed line indicates the datasets where CHNC outperforms. The red line indicates the mean of improvement.}
\label{fig:acc_imp_synthetic_noise20}
\end{figure}

In each plot, the dashed vertical line is at zero improvement. The area in orange on the right of this line indicates the proportion of datasets on which CHNC has positive accuracy improvement. The red solid line marks the average accuracy improvement. In Figures \ref{fig:acc_imp_synthetic_noise20} and \ref{fig:acc_imp_synthetic_noise30}, for both noise levels and for all three classifiers, the red line lies to the right of the dashed line, implying positive average improvement. These positive average accuracy improvements are shown in Table \ref{tab:synthetic_results} for both noise levels. Compared with DM, CL and CT+, CHNC outperforms in $99.62\%, 87.96\%$ and $80.31\%$ of the synthetic datasets at $20\%$ noise level, and $98.36\%, 93.19\%$ and $92.03\%$ at $30\%$ noise level, as displayed in Table \ref{tab:synthetic_results}. These percentages are the areas of the histograms that lie to the right of their zero dashed lines. 

\begin{figure}
\centering
\begin{subfigure}[b]{.5\linewidth}
\centering
\includegraphics[width=\linewidth]{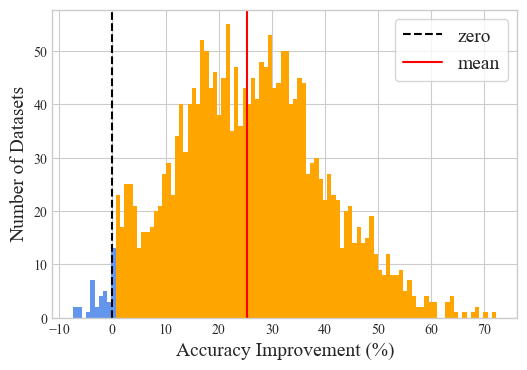}
\caption{Accuracy Improvement over DM}
\label{fig:chnc_vs_dm_30}
\end{subfigure}%
\begin{subfigure}[b]{.5\linewidth}
\centering
\includegraphics[width=\linewidth]{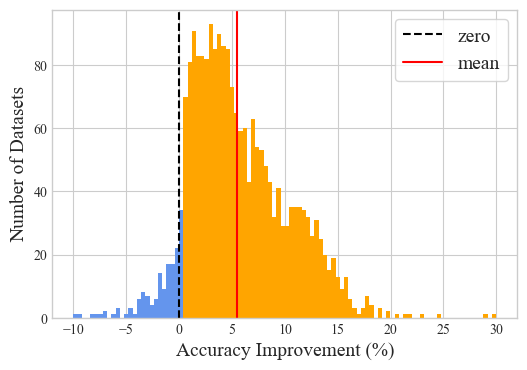}
\caption{Accuracy Improvement over CL}
\label{fig:chnc_vs_cl_30}
\end{subfigure}\\[1ex]
\begin{subfigure}{.5\linewidth}
\centering
\includegraphics[width=\linewidth]{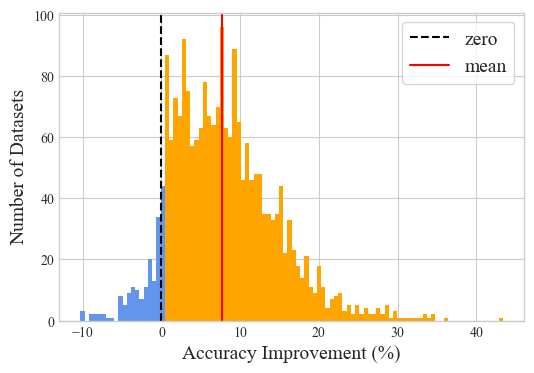}
\caption{Accuracy Improvement over CT+}
\label{fig:chnc_vs_ctp_30}
\end{subfigure}
\caption{Histograms of accuracy improvement on $2160$ synthetic datasets, with $30\%$ noise, yielded by CHNC over (a) DivideMix, (b) Confident Learning and (c) Co-teaching+. Area to the right of the dashed line indicates the datasets where CHNC outperforms. The red line indicates the mean of improvement.}
\label{fig:acc_imp_synthetic_noise30}
\end{figure}

We apply the Wilcoxon test and the paired t-test to evaluate the statistical significance of the difference between the accuracy scores of CHNC and other methods. P-values for both tests when tested with DM, CL and CT+ are close to zero, below $10^{-200}$ to be precise, at both $20\%$ and $30\%$ noise levels, validating the significance of the results.

\begin{table*}[h]
\centering
\caption{Comparisons of accuracy improvement that CHNC yields over benchmark methods on synthetic datasets for 20\% and 30\% noise levels}
\label{tab:synthetic_results}
\vskip 0.1in
\begin{small}
\begin{sc}
\begin{tabular}{llll} \hline
Noise level: 20\% & DM & CL & CT+\\ \hline
Average Acc Imp of CHNC (\%) & 25.74 & 5.23 & 3.10\\
\% data where CHNC outperforms (\%) & 99.62 & 87.96 & 80.31\\
\hline
Noise level: 30\% & DM & CL & CT+\\ \hline
Average Acc Imp of CHNC (\%) & 25.42 & 5.43 & 7.64\\
\% data where CHNC outperforms (\%) & 98.36 & 93.19 & 92.03\\
\hline
\end{tabular}
\end{sc}
\end{small}
\end{table*}

\subsection{Experiments on Real Data}

\subsubsection{Evaluation Metrics} \label{subsubsec:metrics}

We evaluate CHNC and benchmark methods in two aspects. First, we compare their classification performances. Second, we compare their capabilities in detecing noisy labels.

In terms of the classification performance, we evaluate the classification accuracy using two evaluation metrics: accuracy and balanced accuracy. Accuracy is the fraction of correct predictions among all unlabeled samples. Balanced accuracy is the average between the true positive rate and the true negative rate, and this can differ from the overall accuracy depending on the class balance in the dataset. 

Additionally, we also report for each method \textit{the average gap from the highest performance}, or \textit{Avg Gap from Max} for short. 
For a dataset $D$, we compute the gap between the performance of each method and the best performance across methods, denoted by $Acc^{(D)}_{max}$, as $\frac{Acc^{(D)}(M) - Acc^{(D)}_{max}}{Acc^{(D)}_{max}} \times 100\%$. The reported Avg Gap from Max for each method is the average of this measure across all datasets.

CHNC can also be used as a noise detector. By using CHNC, labeled samples are assigned to either prediction set simultaneously with the unlabeled samples. A labeled sample may be placed in the partition set opposite to its label if its similarity to samples with the opposite label is strong enough and the confidence penalty is small enough. Samples like these are considered noisy by the model.  Formally, with the output of CHNC being the minimum cut partition $(S,\overline{S})$ where $S$ and $\overline{S}$ are the sets of positive and negative predictions, respectively, the set $\{\overline{S} \cap L^{+}\} \cup \{S \cap L^{-}\}$ is the set of samples deemed noisy by CHNC. Regarding benchmark methods, a similar procedure can be applied, where we use the trained models to predict labeled or training samples, and labeled samples whose predicted classes are different from their given labels are considered noisy.

Denote the set of labeled samples determined as noisy by $D$, and the true set of noisy labeled samples by $N$. Noise recall is the fraction of noisy samples detected by the model, or $\frac{|N \cap D|}{|N|}$. Noise precision is the fraction of noisy samples among samples that are considered noisy by the model, or $\frac{|N \cap D|}{|D|}$. To combine both aspects of the noise detection, we summarize the noise recall and noise precision together using the noise F1 score, which is the harmonic mean of noise recall and noise precision. 

Again, we test for the significance of the results regarding both classification and noise detection performance using the Wilcoxon signed-ranks test and the paired t-test.

In addition, we inspect the confidence weights distributions of clean and noisy samples to evaluate our proposed confidence weights computation at the end of this section.

\subsubsection{Results on classification performance}

Classification accuracy of CHNC and baseline methods on real data are reported in Table \ref{tab:acc_noise_20} for the $20\%$ noise level. Each entry in the table is the average accuracy over $5$ runs along with the standard deviation.

\begin{table*}[h]
\centering
\caption{Classification accuracy, written as mean (and standard error)$(\%)$, on real data with $20\%$ noise and p-values of the outperformance of CHNC. The highest accuracy is in bold. (*) marks strong statistical significance.}
\label{tab:acc_noise_20}
\vskip 0.1in
\begin{small}
\begin{sc}
\begin{tabular}{lllll} \hline
Data & CHNC & DM & CL & CT+\\ \hline
Vote & \textbf{95.73} (0.61) & 94.66 (2.86)& 89.92 (1.31) & 90.99 (1.70)\\
Breast Cancer & \textbf{94.50} (1.91)& 92.86 (2.23)& 94.04 (0.68)& 90.99 (0.60)\\
Maternal Health & 75.54 (2.80)& 73.37 (2.29)& \textbf{76.00} (1.41)& 75.54 (1.32)\\
Red Wine & 71.96 (1.25)& 72.25 (2.11)& \textbf{74.00} (2.37)& 73.00 (1.49)\\
Obesity & \textbf{97.35} (0.80)& 92.37 (1.02)& 95.90 (1.12)& 97.29 (0.46)\\
Mushroom & 99.76 (0.32)& \textbf{99.89} (0.15)& 99.74 (0.21)& 99.70 (0.26)\\
Phishing Websites & \textbf{93.81} (0.70)& 90.52 (1.36)& 92.91 (0.64)& 93.80 (0.37)\\
Dry Bean & 97.24 (0.27)& 96.41 (0.34)& 97.00 (0.32)& \textbf{97.60} (0.21)\\
Letter & \textbf{95.53} (0.36)& 79.55 (1.08)& 93.92 (0.39)& 92.58 (0.75)\\
Adult & 83.28 (0.34)& 80.62 (0.52)& 82.52 (0.20)& \textbf{84.52} (0.24)\\ \hline
Avg gap from max (\%) & \textbf{-0.82} & -3.67 & -4.69 & -3.70 \\ \hline
Wilcoxon p-values & & 2e-6* & 0.0020* & 0.0571 \\ \hline
Paired t-test p-values & & 1e-5* & 0.0077* & 0.0065* \\ \hline
\end{tabular}
\end{sc}
\end{small}
\end{table*}

At the $20\%$ noise level, CHNC is among the two models with highest accuracy on all $10$ datasets, only except for Wine data. Among these data, CHNC achieves the highest accuracy on Vote, Breast Cancer, Obesity, Phishing Websites and Letter data. The overall standard deviation of the accuracy of CHNC is in the same magnitude as that of other models. P-values given by the Wilcoxon test demonstrate that CHNC performs better than both DM and CL with strong statistical significance ($\alpha=0.05$). P-values of its outperformance over CT+, which is 0.0571, is only slightly above 0.05. This also highlights the statistical significance at the level that is only slightly lower than the level of $\alpha=0.05$. P-values from the paired t-test indicates strong statistical significance of the outperformance of CHNC over all benchmark models.

In terms of balanced accuracy, reported in Table \ref{tab:bal_acc_noise_20}, CHNC achieves the best performance on $5$ out of $10$ datasets. It is the second best method on $2$ other datasets. On the remaining $3$ datasets, which are Maternal Health, Red Wine and Mushroom data, its balanced accuracies are within small margins of $0.24\%, 0.67\%$ and $0.04\%$, respectively, from the second best method.
Moreover, p-values of the outperformance of CHNC over all benchmarks, for both tests, are all smaller than 0.05, emphasizing its superiority over the benchmark models with strong statistical significance.

\begin{table*}[h]
\centering
\caption{Balanced accuracy, written as mean (and standard error)$(\%)$, on real data with $20\%$ noise and p-values of the outperformance of CHNC. The highest accuracy is in bold. (*) marks strong statistical significance.}
\label{tab:bal_acc_noise_20}
\vskip 0.1in
\begin{small}
\begin{sc}
\begin{tabular}{lllll} \hline
Data & CHNC & DM & CL & CT+\\ \hline
Vote & \textbf{96.14} (0.66) & 95.13 (2.31)& 89.55 (2.27) & 92.65 (1.92)\\
Breast Cancer & \textbf{94.04} (1.70)& 90.72 (3.19)& 93.98 (0.95)& 89.37 (1.01)\\
Maternal Health & 74.73 (3.40)& 74.97 (1.84)& 74.84 (1.64)& \textbf{77.58} (4.11)\\
Red Wine & 72.03 (1.32)& 72.70 (1.93)& \textbf{74.04} (2.26)& 72.35 (0.91)\\
Obesity & \textbf{97.28} (0.79)& 92.04 (0.99)& 95.77 (1.15)& 96.99 (0.80)\\
Mushroom & 99.77 (0.31)& \textbf{99.89} (0.16)& 99.74 (0.21)& 99.81 (0.12)\\
Phishing Websites & \textbf{93.79} (0.59)& 90.13 (1.42)& 92.88 (0.61)& 93.20 (0.48)\\
Dry Bean & 97.29 (0.26)& 96.51 (0.31)& 97.06 (0.32)& \textbf{97.46} (0.29)\\
Letter & \textbf{95.52} (0.37)& 79.56 (1.07)& 93.92 (0.39)& 91.91 (0.59)\\
Adult & 78.74 (0.35)& 74.99 (0.45)& 77.05 (0.52)& \textbf{79.13} (0.39)\\ \hline
Avg gap from max (\%) & \textbf{-0.72} & -4.20 & -1.78 & -1.56 \\ \hline
Wilcoxon p-values & & 4e-6* & 0.0008* & 0.0115* \\ \hline
Paired t-test p-values & & 1e-5* & 0.0057* & 0.0122* \\ \hline
\end{tabular}
\end{sc}
\end{small}
\end{table*}

For both accuracy and balanced accuracy, it has the smallest average gap from the method with best performance, averaged across all datasets, as shown at the bottom of Table \ref{tab:acc_noise_20} and \ref{tab:bal_acc_noise_20}, alongside the p-values. This gap of CHNC is distinctively smaller than that of other methods. 

As the noise goes up to $30\%$ (Table \ref{tab:acc_noise_30} and \ref{tab:bal_acc_noise_30}), CHNC is always among the two best methods for both metrics, accuracy and balanced accuracy, except for only one case, which is when we compare the balanced accuracy on Red Wine data. CHNC achieves the highest accuracy on $4$ datasets and the highest balanced accuracy on $5$ datasets out of $10$. On instances where CHNC is the second best method such as on Obesity and Dry Bean data, for both metrics, its performance is particularly close to the best performance, all within $0.5\%$. When comparing the balanced accuracy on Red Wine data, which is the only case where CHNC is not among the two best methods, the balanced accuracy of CHNC is only $0.15\%$ below the second best method and $0.75\%$ below the best one.  

P-values of the outperformance of CHNC over all benchmarks across all datasets, for both evaluation metrics and both Wilcoxon test and paired t-test, are all smaller than 0.05, demonstrating improvements over benchmark methods with strong statistical significance. The gap from the best performance, averaged across all data, for CHNC is clearly smaller than that of other benchmarks, for both accuracy and balanced accuracy.

Note that we also conduct another set of statistical tests by omitting Letter data as it appears that all benchmark methods suffer from a higher level of noise, of $30\%$, particularly on this data.  
The result is that, all p-values are still below $0.05$, except for the p-values of the Wilcoxon test on the comparison between the accuracy of CHNC and CT+ as shown in the second to last row of Table \ref{tab:acc_noise_30} where the p-value is $0.0833$, which still reflects the superiority of CHNC only that it is not at the significance level of $\alpha=0.05$.


\begin{table*}[h]
\centering
\caption{Classification accuracy, written as mean (and standard error)$(\%)$, on real data with $30\%$ noise and p-values of the outperformance of CHNC. The highest accuracy is in bold. (*) marks strong statistical significance.}
\label{tab:acc_noise_30}
\vskip 0.1in
\begin{small}
\begin{sc}
\begin{tabular}{lllll} \hline
Data & CHNC & DM & CL & CT+\\ \hline
Vote & \textbf{92.82} (1.50)& 91.60 (1.99)& 84.12 (2.07)& 86.11 (4.14)\\
Breast Cancer & 92.51 (1.59)& \textbf{95.09} (1.84)& 92.05 (1.41)& 88.19 (1.07)\\
Maternal Health & \textbf{69.84} (3.26)& 67.48 (4.20)& 64.52 (5.23)& 68.59 (4.37)\\
Red Wine & 72.25 (1.75)& 71.46 (2.65)& \textbf{72.71} (2.42)& 72.00 (1.31)\\
Obesity & 94.23 (1.14)& 91.92 (0.67)& 90.57 (2.01)& \textbf{94.61} (1.57)\\
Mushroom & \textbf{99.79} (0.33)& 98.65 (0.91)& 98.25 (0.56)& 98.15 (0.55)\\
Phishing Websites & 91.13 (1.06)& 90.36 (1.36)& 85.96 (0.72)& \textbf{92.63} (0.57)\\
Dry Bean & 97.17 (0.33)& 96.47 (0.41)& 95.15 (0.17)& \textbf{97.53} (0.16)\\
Letter & \textbf{94.23} (0.53)& 77.47 (0.83)& 88.34 (0.60)& 77.16 (3.10)\\
Adult & 81.80 (0.38)& 79.73 (0.30)& 78.79 (0.13)& \textbf{83.91} (0.32)\\ \hline
All data & & & & \\ \hline
\; Avg gap from max (\%) & \textbf{-0.82} & -3.67 & -4.69 & -3.70 \\ \hline
\; Wilcoxon p-values & & 1e-4* & 2e-8* & 0.0090* \\ \hline
\; Paired t-test p-values & & 0.0008* & 2e-10* & 0.0011* \\ \hline
Without LETTER data & & & & \\ \hline
\; Avg gap from max (\%) & \textbf{-0.91} & -2.10 & -4.52 & -2.10 \\ \hline
\; Wilcoxon p-values & & 0.0018* & 2e-7* & 0.0833 \\ \hline
\; Paired t-test p-values & & 0.0077* & 1e-8* & 0.0185* \\ \hline
\end{tabular}
\end{sc}
\end{small}
\end{table*}

\begin{table*}[h]
\centering
\caption{Balanced accuracy, written as mean (and standard error)$(\%)$, on real data with $30\%$ noise and p-values of the outperformance of CHNC. The highest accuracy is in bold. (*) marks strong statistical significance.}
\label{tab:bal_acc_noise_30}
\vskip 0.1in
\begin{small}
\begin{sc}
\begin{tabular}{lllll} \hline
Data & CHNC & DM & CL & CT+\\ \hline
Vote & \textbf{93.06} (1.50)& 92.06 (1.99)& 84.51 (2.07)& 88.88 (4.14)\\
Breast Cancer & 92.64 (1.59)& \textbf{93.94} (1.84)& 92.26 (1.41)& 86.74 (1.07)\\
Maternal Health & \textbf{68.50} (3.26)& 64.62 (4.20)& 64.15 (5.23)& 60.97 (4.37)\\
Red Wine & 72.10 (1.75)& 72.25 (2.65)& \textbf{72.85} (2.42)& 72.23 (1.31)\\
Obesity & 94.21 (1.14)& 91.85 (0.67)& 90.39 (2.01)& \textbf{94.50} (1.57)\\
Mushroom & \textbf{99.80} (0.33)& 98.67 (0.91)& 98.26 (0.56)& 98.26 (0.55)\\
Phishing Websites & 91.24 (1.06)& 90.01 (1.36)& 87.01 (0.72)& \textbf{92.10} (0.57)\\
Dry Bean & 97.19 (0.33)& 96.56 (0.41)& 95.22 (0.17)& \textbf{97.33} (0.16)\\
Letter & \textbf{94.22} (0.53)& 77.47 (0.83)& 88.33 (0.60)& 77.12 (3.10)\\
Adult & \textbf{78.31} (0.38)& 74.52 (0.30)& 74.33 (0.13)& 78.22 (0.32)\\ \hline
All data & & & & \\ \hline
\; Avg gap from max (\%) & \textbf{-0.38} & -3.72 & -4.22 & -4.38 \\ \hline
\; Wilcoxon p-values & & 8e-5* & 4e-10* & 0.0006* \\ \hline
\; Paired t-test p-values & & 0.0004* & 6e-10* & 0.0002* \\ \hline
Without LETTER data & & & & \\ \hline
\; Avg gap from max (\%) & \textbf{-0.42} & -2.16 & -4.00 & -2.85 \\ \hline
\; Wilcoxon p-values & & 0.0015* & 1e-8* & 0.0086* \\ \hline
\; Paired t-test p-values & & 0.0067* & 4e-8* & 0.0040* \\ \hline
\end{tabular}
\end{sc}
\end{small}
\end{table*}

\subsubsection{Results on noise detection performance}
We evaluate the methods' capability in detecting noise through the F1 score as explained previously in Section \ref{subsubsec:metrics}. The results are displayed in Table \ref{tab:noise_f1_20} and \ref{tab:noise_f1_20}. 

At the noise level of $20\%$, CHNC is consistently among the best two methods, except for Red Wine data, while other methods attain the lowest performance on different datasets. At the noise level of $30\%$, this argument also holds for most data, except for Breast Cancer, Red Wine and Adult data. On many datasets where CHNC is the second best method, its performance is close to the best model and significantly exceeds the other two benchmarks. 

For both noise levels, the gap from the highest F1 score for noise detection averaged across all datasets of CHNC is the smallest. The statistical tests, both Wilcoxon and paired t-test, shows that CHNC has higher F1 scores regarding noise detection than all other models at noise level of $20\%$ with high statistical significance, at the level of $\alpha=0.05$. For noise level of $30\%$, the statistical significance only shows for the comparison with CL.

Again, we conduct another set of tests with the Letter data omitted. For $20\%$ noise level, the average gap from the best performance of CHNC is smaller than other methods. Wilcoxon test demonstrates the significance in the superiority of CHNC over DM while paired t-test exhibits this for the comparison with both DM and CL. For $30\%$ noise level, this is the case for the comparison with CL.

\begin{table*}[h]
\centering
\caption{F1 score of noise detection, written as mean (and standard error)$(\%)$, on real data with $20\%$ noise and p-values of the outperformance of CHNC. The highest accuracy is in bold. (*) marks strong statistical significance.}
\label{tab:noise_f1_20}
\vskip 0.1in
\begin{small}
\begin{sc}
\begin{tabular}{lllll} \hline
Data & CHNC & DM & CL & CT+\\ \hline
Vote & 86.64 (2.21)& \textbf{88.37} (1.14)& 74.36 (5.74)& 81.37 (3.49)\\
Breast Cancer & \textbf{88.80} (4.56)& 86.91 (7.13)& 88.13 (2.21)& 81.86 (4.96)\\
Maternal Health & 56.35 (2.52)& 49.88 (4.14)& \textbf{58.20} (4.01)& 54.69 (2.26)\\
Red Wine & 51.95 (2.48)& 52.42 (2.80)& \textbf{55.00} (1.81)& 54.39 (2.61)\\
Obesity & 92.64 (1.80)& 82.75 (3.52)& 87.71 (0.85)& \textbf{94.00} (1.62)\\
Mushroom & 99.55 (0.42)& \textbf{99.60} (0.55)& 95.35 (0.83)& 99.38 (0.33)\\
Phishing Websites & \textbf{86.62} (1.04)& 78.76 (1.73)& 81.24 (1.06)& 86.02 (0.84)\\
Dry Bean & 93.78 (0.18)& 91.13 (0.40)& 90.47 (0.57)& \textbf{94.21} (0.20)\\
Letter & \textbf{91.26} (0.66)& 60.96 (0.73)& 83.55 (0.63)& 84.24 (1.35)\\
Adult & 63.01 (0.38)& 61.80 (0.52)& 62.16 (0.34)& \textbf{68.39} (0.27)\\ \hline
All data & & & & \\ \hline
\; Avg gap from max (\%) & \textbf{-2.05} & -8.82 & -5.53 & -3.15 \\ \hline
\; Wilcoxon p-values & & 1e-5* & 3e-6* & 0.0438* \\ \hline
\; Paired t-test p-values & & 5e-5* & 7e-6* & 0.0281*\\ \hline
Without LETTER data & & & & \\ \hline
\; Avg gap from max (\%) & \textbf{-2.27} & -6.12 & -5.21 & -2.64 \\ \hline
\; Wilcoxon p-values & & 0.0002* & 5e-5*& 0.2474 \\ \hline
\; Paired t-test p-values & & 0.0002* & 0.0002* & 0.1816 \\ \hline
\end{tabular}
\end{sc}
\end{small}
\end{table*}

\begin{table*}[h]
\centering
\caption{F1 score of noise detection, written as mean (and standard error)$(\%)$, on real data with $30\%$ noise and p-values of the outperformance of CHNC. The highest accuracy is in bold. (*) marks strong statistical significance.}
\label{tab:noise_f1_30}
\vskip 0.1in
\begin{small}
\begin{sc}
\begin{tabular}{lllll} \hline
Data & CHNC & DM & CL & CT+\\ \hline
Vote & 79.44 (6.96)& \textbf{88.06} (2.77)& 67.48 (7.22)& 75.57 (4.05)\\
Breast Cancer & 82.45 (8.48)& \textbf{92.13} (3.62)& 85.46 (2.15)& 79.68 (3.70)\\
Maternal Health & 54.77 (4.75)& 53.12 (4.00)& 49.25 (4.00)& \textbf{55.37} (3.23)\\
Red Wine & 57.94 (2.07)& 60.46 (1.71)& 56.53 (1.82)& \textbf{62.04} (1.47)\\
Obesity & 88.54 (2.86)& 86.24 (0.98)& 79.20 (1.87)& \textbf{90.81} (2.90)\\
Mushroom & \textbf{99.73} (0.28)& 98.02 (1.30)& 89.56 (1.00)& 97.06 (0.97)\\
Phishing Websites & 86.41 (1.74)& 84.49 (1.67)& 72.43 (0.72)& \textbf{88.16} (0.62)\\
Dry Bean & 95.37 (0.32)& 93.81 (0.53)& 87.16 (0.80)& \textbf{95.68} (0.16)\\
Letter & \textbf{91.47} (0.83)& 67.36 (0.55)& 75.96 (0.67)& 67.09 (3.11)\\
Adult & 65.67 (0.75)& 69.73 (0.22)& 60.36 (0.14)& \textbf{75.78} (0.31)\\ \hline
All data & & & & \\ \hline
\; Avg gap from max (\%) & \textbf{-4.62} & -5.38 & -13.76 & -5.7 \\ \hline
\; Wilcoxon p-values & & 0.5419 & 1e-9* & 0.5305 \\ \hline
\; Paired t-test p-values & & 0.2809 & 5e-10* & 0.1559 \\ \hline
Without LETTER data & & & & \\ \hline
\; Avg gap from max (\%) & -5.13 & \textbf{-3.05} & -13.40 & -3.38 \\ \hline
\; Wilcoxon p-values & & 0.9218 & 3e-8* & 0.9251 \\ \hline
\; Paired t-test p-values & & 0.9551 & 3e-8* & 0.8592 \\ \hline
\end{tabular}
\end{sc}
\end{small}
\end{table*}

\subsubsection{Confidence Weights Distribution}

Additionally, we assess our confidence weights computation method by examining the unscaled confidence weights of labeled samples, which vary between $0$ and $1$. Confidence weights distribution of labeled samples in three datasets: Vote, Obesity and Phishing Websites, for the noise level of $30\%$, are shown in Figure \ref{fig:confidence_score_vote}, \ref{fig:confidence_score_obesity} and \ref{fig:confidence_score_phishing}. Each plot consist of two histograms, one for the confidence weights of ``clean'' samples, or samples whose labels are not contaminated, and ``noisy'' samples whose labels are flipped. The goal is to have the two distributions to be as separable as possible, and to have the distribution of clean samples close to one and that of noisy samples close to zero.

As shown in the figures, most clean samples have their confidence weights (in blue, no pattern) higher on average than those of the samples with noisy labels (in pink, with pattern). The stark difference between the two distributions in each plot shows the efficiency of our method in translating the nested partition sequence of samples into their confidence scores.

\begin{figure}
     \centering
     \begin{subfigure}[b]{\linewidth}
         \centering
         \includegraphics[width=0.75\linewidth]{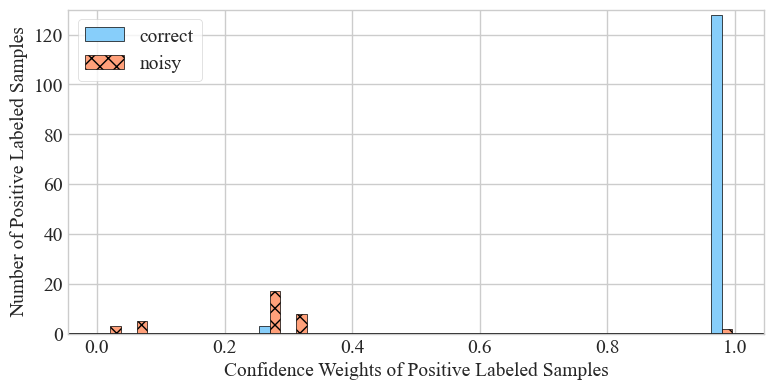}
         \caption{Positive labeled samples in Vote data}
         \label{fig:vote-positive}
     \end{subfigure}
     \begin{subfigure}[b]{\linewidth}
         \centering
         \includegraphics[width=0.75\linewidth]{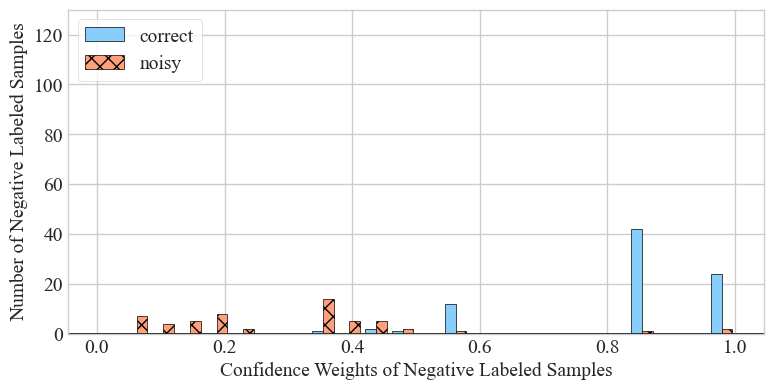}
         \caption{Negative labeled samples in Vote data}
         \label{fig:vote-negative}
     \end{subfigure}
        \caption{Confidence weights distributions of clean and noisy labeled samples in Vote data with $30\%$ noise rate}
    \label{fig:confidence_score_vote}
\end{figure}

\begin{figure}
     \centering
     \begin{subfigure}[b]{\linewidth}
         \centering
         \includegraphics[width=0.75\linewidth]{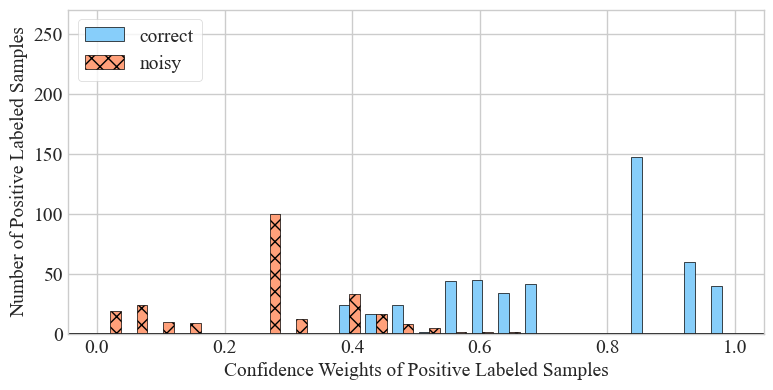}
         \caption{Positive labeled samples in Obesity data}
         \label{fig:obesity-positive}
     \end{subfigure}
     \begin{subfigure}[b]{\linewidth}
         \centering
         \includegraphics[width=0.75\linewidth]{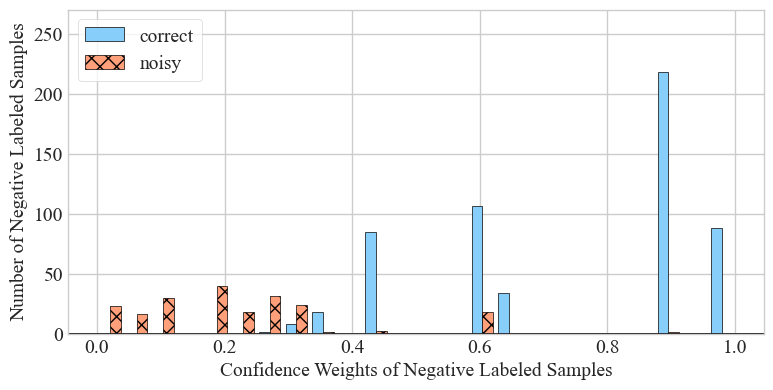}
         \caption{Negative labeled samples in Obesity data}
         \label{fig:obesity-negative}
     \end{subfigure}
        \caption{Confidence weights distributions of clean and noisy labeled samples in Obesity data with $30\%$ noise rate}
    \label{fig:confidence_score_obesity}
\end{figure}

\begin{figure}
     \centering
     \begin{subfigure}[b]{\linewidth}
         \centering
         \includegraphics[width=0.75\linewidth]{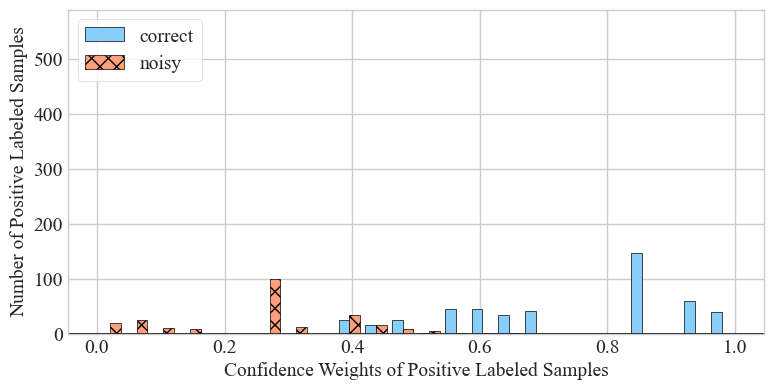}
         \caption{Positive labeled samples in Phishing data}
         \label{fig:phishing-positive}
     \end{subfigure}
     \begin{subfigure}[b]{\linewidth}
         \centering
         \includegraphics[width=0.75\linewidth]{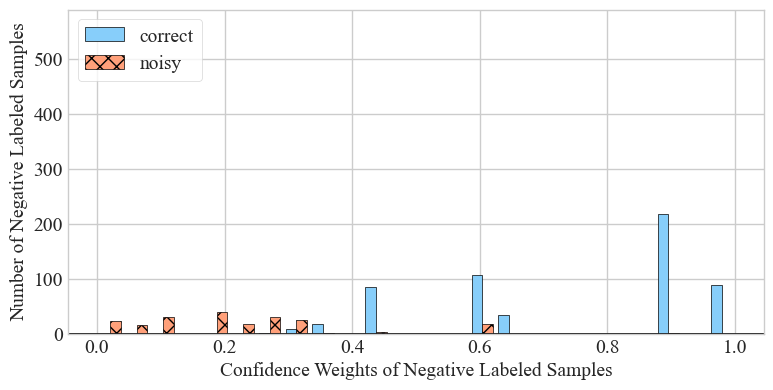}
         \caption{Negative labeled samples in Phishing data}
         \label{fig:phishing-negative}
     \end{subfigure}
        \caption{Confidence weights distributions of clean and noisy labeled samples in Obesity data with $30\%$ noise rate}
    \label{fig:confidence_score_phishing}
\end{figure}

\subsubsection{Computation time}

The computation times of CHNC and benchmark methods are shown in Table \ref{tab:time_noise_20} and \ref{tab:time_noise_30}, sorted in the ascending order of the data size. These runtimes are the results of running the experiments on the Intel Xeon Skylake 6130 CPUs. For the implementation of deep learning benchmarks which require GPUs, GTX 2080ti GPUs are used.

The runtimes of CHNC are consistently smaller than both DM and CT+ while they are comparable with those of CL for small data. 
Note that there is a slight decrease in the runtimes of CHNC, DM and CT+ as we go from the Mushroom dataset to the next larger data of size more than $10000$ samples, Phishing Websites, before they continue to rise again as we progress to other larger datasets.
This is due to the different parameter choices between data of size smaller than $10000$ and data of larger size. For CHNC, we use a smaller $k$ in the sparsification for large data, resulting in the graph representation that contains a smaller number of edges. For the two deep-learning benchmarks DM and CT+, we use a batch size of $32$ for data of fewer than $10000$ samples, and a batch size of $128$ for larger data.

\begin{table*}[h]
\centering
\caption{Computation time in seconds on real data with $20\%$ noise, averaged across 5 experiments for each dataset.}
\label{tab:time_noise_20}
\vskip 0.1in
\begin{small}
\begin{sc}
\begin{tabular}{llllll} \hline
Data & Size & CHNC & DM & CL & CT+\\ \hline
Vote & (435, 16) & 1.63 & 21.09 & 1.66 & 23.51\\
Breast Cancer & (569, 30) & 1.76 & 22.90 & 1.70 & 31.69\\
Maternal Health & (1014, 6) & 2.79 & 28.73 & 1.63 & 60.48\\
Red Wine & (1599, 11) & 4.32 & 30.17 & 2.03 & 86.56\\
Obesity & (2111, 19) & 5.83 & 35.97 & 1.93 & 114.78\\
Mushroom & (8124, 112) & 22.24 & 113.27 & 4.60 & 407.74\\
Phishing Websites & (11055, 30) & 21.15 & 49.85 & 5.07 & 151.71\\
Dry Bean & (13611, 17) &26.06 & 53.40 & 7.02 & 178.39\\
Letter & (20000, 16) & 39.62 & 95.80 & 9.37 & 266.42\\
Adult & (45222, 82) &94.37 & 192.05 & 23.74 & 590.95\\ \hline
\end{tabular}
\end{sc}
\end{small}
\end{table*}

\begin{table*}[h]
\centering
\caption{Computation time in seconds on real data with $30\%$ noise, averaged across 5 experiments for each dataset.}
\label{tab:time_noise_30}
\vskip 0.1in
\begin{small}
\begin{sc}
\begin{tabular}{llllll} \hline
Data & Size  & CHNC & DM & CL & CT+\\ \hline
Vote & (435, 16) & 1.37 & 18.79 & 1.55 & 22.71\\
Breast Cancer & (569, 30) & 1.78 & 22.86 & 2.29 & 28.92\\
Maternal Health & (1014, 6) & 2.72 & 27.73 & 1.72 & 53.31\\
Red Wine & (1599, 11) & 4.26 & 30.35 & 1.90 & 81.09\\
Obesity & (2111, 19) & 6.24 & 34.34 & 2.36 & 109.77\\
Mushroom & (8124, 112) & 22.19 & 102.89 & 4.72 & 401.52\\
Phishing Websites & (11055, 30) & 21.12 & 44.81 & 5.31 & 149.47\\
Dry Bean & (13611, 17) & 25.89 & 55.32 & 7.30 & 180.41\\
Letter & (20000, 16) & 39.30 & 96.02 & 9.63 & 273.30\\
Adult & (45222, 82) & 97.90 & 178.59 & 24.31 & 622.44\\ \hline
\end{tabular}
\end{sc}
\end{small}
\end{table*}

\section{Conclusions} \label{sec:conclusion}

We introduce a new binary classification method, Confidence HNC or CHNC, that varies its reliance on the labeled data through the use of confidence weights.
This method that balances the similarity between samples in the same prediction set versus the dis-similarity between samples from different prediction sets may reverse the given labels of some labeled samples. It is used as a classification procedure in the presence of label noise, and also as a detection mechanism of noisy labels. 
The experiments reported here demonstrate the superiority of CHNC in terms of classification accuracy and balanced accuracy, as well as the noise detection capability that is competitive in certain cases. 

Moreover, we introduced in this work a method to generate the likelihood for each labeled sample for being mislabeled through parametric minimum cut problems. This method was shown to be effective as the computed confidence scores for labeled samples correlate well with whether they are mislabeled or not.

For future works, one interesting direction is to investigate whether the framework of CHNC, which works well for noisy data, can be effective in dealing with setups in which the number of labeled samples, or size of training data, is particularly small. 
Furthermore, the newly introduced confidence weights computation may motivate a new approach on generating probability estimates for each sample for belonging to either class from a minimum cut-based approach. Typically, a minimum cut-based method determines whether a sample belongs to the positive or the negative class without quantifying the uncertainty. That is, it only indicates whether the label of a sample is $+1$ or $-1$. \citet{blum2004semi} developed a method that gives the class probability estimate for each sample using an ensemble method that solves a minimum cut problem on multiple graphs, of which each is derived by perturbing the edge weights of the original graph representation. In our work here, we demonstrate how we may infer from the parametric minimum cut solution the varying likelihood that a sample belongs to either class. An interesting extension from this idea would be to find a method can map our confidence scores to probability estimates.

\section*{Acknowledgments}
This research was supported by the AI Institute NSF Award 2112533. This research used the Savio computational cluster resource provided by the Berkeley Research Computing program at the University of California, Berkeley (supported by the UC Berkeley Chancellor, Vice Chancellor for Research, and Chief Information Officer).

\bibliographystyle{unsrtnat}  
\bibliography{ref}

\end{document}